\crefname{algorithm}{algorithm}{algorithms}
\Crefname{algorithm}{Algorithm}{Algorithms}
\crefname{appendix}{Appendix}{Appendices}
\Crefname{appendix}{Appendix}{Appendices}
\newcommand{\cmark}{\ding{51}}%
\newcommand{\xmark}{\ding{55}}
\newcommand{\med}{\ac{CWMed}\xspace}
\newcommand{\tm}{\ac{CWTM}\xspace}
\newcommand{\gm}{\ac{GM}\xspace}
\newcommand{\dptm}{DeepSet-TM\xspace}
\newcommand{\argmin}{\mathop{\mathrm{argmin}}}
\newcommand{\argmax}{\mathop{\mathrm{argmax}}}
\newcommand{\aggregation}{\textsc{RobAvg}}
\def\R{\mathbb{R}}
\def\b1{\boldsymbol{1}}
\newcommand{\card}[1]{\left\lvert#1\right\rvert}
\providecommand{\norm}[1]{\ensuremath{\left\lVert#1\right\rVert }}
\newcommand{\condexpect}[2]{\mathbb{E}_{#1}\left[{#2}\right]}
\newtheorem{lemma}{Lemma}
\newtheorem{definition}{Definition}
\newcommand*{\Perm}[2]{{}^{#1}\!P_{#2}}
\newcommand{\cifar}{CIFAR-10\xspace}
\newcommand{\agnews}{AG-News\xspace}
\newcommand{\cifarh}{CIFAR-100\xspace}
\newcommand{\sia}{SIA\xspace}
\newcommand{\lma}{LMA\xspace}
\newcommand{\pgd}{PGD\xspace}
\newcommand{\vitb}{ViT-B/32\xspace}
\newcommand{\disbert}{DistilBERT\xspace}
\newcommand{\copur}{\textsc{CoPur}\xspace}
\newcommand{\dps}{DeepSet\xspace} %
\acrodef{DL}{decentralized learning}
\acrodef{ML}{machine learning}
\acrodef{D-PSGD}{decentralized parallel stochastic gradient descent}
\acrodef{FL}{Federated Learning}
\acrodef{FE}{federated ensembles}
\acrodef{SGD}{stochastic gradient descent}
\acrodef{IID}{independent and identically distributed}
\acrodef{non-IID}{non independent and identically distributed}
\acrodef{RMSE}{root mean square error}
\acrodef{RMW}{random model walk}
\acrodef{GL}{gossip learning}
\acrodef{DWT}{discrete wavelet transform}
\acrodef{LAN}{local area network}
\acrodef{WAN}{wide area network}
\acrodef{NN}{neural network}
\acrodef{KD}{knowledge distillation}
\acrodef{IOT}{internet of things}
\acrodef{VAE}{variational autoencoder}
\acrodef{CNN}{convolutional neural network}
\acrodef{ERM}{empirical risk minimization}
\acrodef{OFL}{One-Shot Federated Learning}
\acrodef{IFL}{iterative federated learning}
\acrodef{GAN}{generative adversarial networks}
\acrodef{SOTA}{state-of-the-art}
\acrodef{MoE}{Mixture-of-Experts}
\acrodef{LLM}{Large Language Model}
\acrodef{MMoE}{Multi-gate Mixture-of-Experts}
\acrodef{MTL}{Multi-task Learning}
\acrodef{DNN}{Deep Neural Network}
\acrodef{FGSM}{Fast Gradient Sign Method}
\acrodef{SIA}{Strongest Inverted Attack}
\acrodef{LMA}{Loss Maximization Attack}
\acrodef{CPA}{Class Prior Attack}
\acrodef{PGD}{Projected Gradient Descent}
\acrodef{CWMed}{coordinate-wise median}
\acrodef{CWTM}{coordinate-wise trimmed mean}
\acrodef{GM}{geometric median} %
\newcommand{\mynote}[3]{
		\fbox{\bfseries\sffamily\scriptsize#1}
		{\small$\blacktriangleright$\textsf{\emph{\color{#3}{#2}}}$\blacktriangleleft$}}
	\newcommand{\zzz}[1]{{\setlength{\fboxsep}{2pt}\fcolorbox{black}{yellow}{\textsf{\emph{#1}}}}\xspace}}
\newcommand{\mynote}[3]{}
	\newcommand{\zzz}[1]{}}
\definecolor{BlueViolet}{RGB}{138,43,226}
\title{Robust Federated Inference}
\author{Akash Dhasade$^1$, Sadegh Farhadkhani$^1$, Rachid Guerraoui$^1$, Nirupam Gupta$^2$, \\
\textbf{Maxime Jacovella$^1$, Anne-Marie Kermarrec$^1$, Rafael Pinot$^3$} \\
$^1$EPFL, Switzerland \quad $^2$University of Copenhagen, Denmark\\
$^3$Sorbonne Université and Université Paris Cité, CNRS, LPSM, France \\
\texttt{akash.dhasade@epfl.ch} \quad \texttt{nigu@di.ku.dk} \quad \texttt{pinot@lpsm.paris}\\
}
\begin{document}

\maketitle

\begin{abstract}
Federated inference, in the form of one-shot federated learning, edge ensembles, or federated ensembles, has emerged as an attractive solution to combine predictions from multiple models. This paradigm enables each model to remain local and proprietary while a central server queries them and aggregates predictions. 
Yet, the robustness of federated inference has been largely neglected, leaving them vulnerable to even simple attacks. 
To address this critical gap, we formalize the problem of robust federated inference and provide the first robustness analysis of this class of methods. 
Our analysis of averaging-based aggregators shows that the error of the aggregator is small either when the dissimilarity between honest responses is small or the margin between the two most probable classes is large.
Moving beyond linear averaging, we show that the problem of robust federated inference with non-linear aggregators can be cast as an adversarial machine learning problem.
We then introduce an advanced technique using the DeepSet aggregation model, proposing a novel composition of adversarial training and test-time robust aggregation to robustify non-linear aggregators. 
Our composition yields significant improvements, surpassing existing robust aggregation methods by $4.7 - 22.2\%$ in accuracy points across diverse benchmarks.

\end{abstract}

\section{Introduction}

Over the past several years, concepts such as \emph{one-shot federated learning} (OFL)~\citep{dai2024enhancing,diao2023towards,zhang2022dense,guha2019one}, \emph{edge ensembles}~\citep{10759580,9414740}, and \emph{federated ensembles}~\citep{allouah2024revisiting,hamer2020fedboost} have gained traction in collaboratively performing inference from several client-local models. 
More recently, the availability of diverse open-source large language models (LLMs) has spurred interest in aggregating outputs from multiple models to answer a given query, giving rise to sophisticated \emph{LLM ensembles} that leverage complementary strengths of individual models~\citep{tekin-etal-2024-llm,wang2024fusing,jiang-etal-2023-llm}. Despite being introduced under different names, these techniques share a common principle: combining predictions from multiple client-held models to produce a single output. In this paper, we collectively refer to these approaches under the terminology of \emph{federated inference}. In this setting, clients retain proprietary (locally trained) models, while a central server queries them for inference as illustrated in \Cref{fig:fi}. The individual predictions are then aggregated into a final prediction, either using averaging-based aggregations~\citep{dai2024enhancing,zhang2022dense,gong2022preserving} or server-side aggregator neural networks~\citep{allouah2024revisiting,wang2024fusing}.

While federated inference is gaining traction, its robustness to model failures and poisoned outputs remains largely overlooked in the literature, despite some initial preliminary study~\citep{liu2022copur}. This gap is critical for two reasons: \emph{(i)} failures and errors are practically unavoidable, and \emph{(ii)} existing work in robust statistics and Byzantine-robust \ac{ML} demonstrates that undefended models are inherently vulnerable, even to relatively simple attacks~\citep{guerraoui2024robust,diakonikolas2023algorithmic}. It is therefore of paramount importance to clearly define the potential threats that may arise in federated inference, so as to prevent a technological advantage from becoming a significant vulnerability. 
In this paper, we take a step toward closing this gap by defining potential failure modes of federated inference and presenting the first robustness analysis of this class of schemes. 

\subsection{Main Contributions}

\textbf{Problem formulation and analysis of the averaging aggregator.} We formalize the problem of robust federated inference for the first time. We consider a system of $n$ clients, each with a local data distribution and probit-valued classifier, where the outputs of up to $f < \tfrac{n}{2}$ clients may be arbitrarily corrupted at inference time. 
Our goal is to design aggregation schemes that remain accurate with respect to the global data distribution despite the corruptions.
When the server uses an averaging-based aggregation~\citep{dai2024enhancing,zhang2022dense,gong2022preserving}, a natural way to robustify is to substitute it with a robust averaging scheme~\citep{allouah2023fixing} which ensures that the output of the aggregator is an estimate of the average of the honest probits.
Prominent examples include \ac{CWTM}, \ac{CWMed}, \etc~\citep{guerraoui2024robust}.
However, we show that robust averaging may prove insufficient since the aggregator's output can be sufficiently close to the honest average, yet produce a misclassification.
Nevertheless, in the cases where averaging suffices, we derive formal robustness certificates for federated inference. 
Our analysis shows that the error of the aggregator depends upon the fraction of corruptions $f/n$, the margin between the top two classes, and the dissimilarity between the outputs of different clients.

\textbf{Robust Inference as an adversarial \ac{ML} problem.} Beyond averaging, recent work has shown that non-linear trained aggregators (\eg neural networks) often outperform averaging-based aggregators in the uncorrupted setting~\citep{allouah2024revisiting,wang2024fusing}.
When using such aggregators, we show that the problem of robust inference can be cast as an adversarial learning problem over the probit-vectors.
Encouragingly, contrary to the difficulty of standard adversarial \ac{ML} problems in the image space~\citep{goodfellow2014explaining}, we show that the adversarial problem on probit-vectors can be more reliably addressed, thanks to the structure of the input space where each corrupted vector is confined to a probability simplex over the number of classes.
Yet, naively leveraging adversarial training~\citep{madry2018towards} to solve the problem remains computationally intractable due to the high cardinality of permuting through different choices of adversarial clients at training time, since \textit{any} $f$ (unknown) clients out of $n$ can be malicious.

\begin{figure}[t!]
    \centering
    \begin{minipage}{0.43\linewidth}
        \centering
        \includegraphics[width=\linewidth]{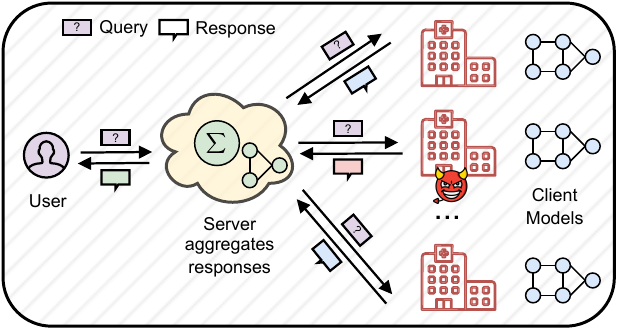} %
        \vspace{-10pt}
        \caption{Federated Inference}
        \label{fig:fi}
    \end{minipage}%
    \hfill
    \begin{minipage}{0.55\linewidth}
        \centering
    \scriptsize
    \begin{tabular}{c@{\hspace{0.2cm}} c@{\hspace{0.2cm}} c@{\hspace{0.2cm}} c@{\hspace{0.2cm}}  c@{\hspace{0.2cm}}  c@{\hspace{0.2cm}}}
    \toprule
    \textbf{DeepSet} & \textbf{CWTM} & \textbf{Adv. Tr.} &  \cifar & \cifarh & \agnews\\
    \cmidrule(lr){1-3}\cmidrule(r){4-4}\cmidrule(r){5-5}\cmidrule(r){6-6}
    \cmark & \xmark & \xmark & $46.0 \pm 3.9$ & $47.4 \pm 3.1$ &  $76.4 \pm 1.9$  \\
    
    \cmark & \cmark & \xmark &  $47.0 \pm 3.8$   & $67.0 \pm 0.7$ & $76.7 \pm 2.2$  \\
    
    \cmark & \xmark & \cmark & $48.6 \pm 6.7$ & $65.1 \pm 0.8$ & $76.7 \pm 0.9$  \\
    
    \cmark & \cmark & \cmark & $\mathbf{51.4 \pm 2.2}$  & $\mathbf{68.0 \pm 0.8}$   &  $\mathbf{77.5\pm1.2}$  \\
    \bottomrule
    \end{tabular}
    \captionof{table}{Evaluation of robust elements in a setup with $n=17, f=4$. We report the worst-case accuracy across $5$ different attacks (see \Cref{subsec:ablation_study} for details).} 
    \label{tab:ablation_intro}
    \end{minipage}
\end{figure}

\textbf{Robust DeepSet Aggregator.} To alleviate this issue, we propose to use a neural network aggregator based on the DeepSet model~\citep{zaheerdeepsets}, an architecture which is invariant to the order of inputs.
Specifically, since the aggregator's output can be independent to the order of clients, leveraging DeepSets enables us to reduce the search of choosing $f$ adversaries to $\binom{n}{f}$ instead of requiring to permute through them \ie $\binom{n}{f}\times f!$.
In practice, we show that adversarial training by sampling any $N$ choices of adversaries where $N \ll \binom{n}{f}$ suffices to achieve good performance with DeepSet.
To further reduce the sensitivity of DeepSet to corrupted probits, we propose a composition of robust averaging with DeepSet at inference-time, which significantly boosts empirical performance.
In particular, we show that this composition may only be applied at inference-time, preventing any escalation of training costs during adversarial training.
By combining robust elements from both the adversarial \ac{ML} and robust \ac{ML} literature in our novel composition, we achieve the \ac{SOTA} performance for federated inference as illustrated in \Cref{tab:ablation_intro}.

\textbf{Empirical validation.} To rigorously evaluate defenses, we design a new attack called the \ac{SIA} that challenges existing defenses. We conduct extensive experiments on three datasets (\cifar, \cifarh, and \agnews) covering both vision and language modalities as well as diverse model families (ResNet-8, ViT-B/32, DistilBERT). Our approach yields a $4.7$–$22.2\%$ points improvement over existing methods across a suite of $6$ different attacks, including \ac{SIA}.

\subsection{Related Work}

\textbf{Federated Inference.} 
Collaborative inference through client ensembles has been explored in one-shot \ac{FL}~\citep{guha2019one,gong2022preserving,dai2024enhancing} and decentralized edge networks~\citep{9414740,10759580}, motivated by communication efficiency, reduced training cost, or proprietary nature of client models accessible only via black-box inference.
Another common scenario is when clients already possess trained models which are offered in model market scenarios~\citep{ijcai2021p205}. 
Yet, the robustness of federated inference has received limited attention.
An initial preliminary work was conducted by \citet{liu2022copur} in the vertical federated learning setting, where they try to reconstruct the underlying uncorrupted responses using an autoencoder and a block-sparse optimization process.
Their method, \copur, purifies responses before aggregation on the server. 
However, purification alone is vulnerable to stronger attacks and highly sensitive to input magnitudes, since it operates in the logit space rather than probits.

\textbf{Byzantine Distributed Learning.} Byzantine robustness has become a central topic in distributed ML~\citep{alistarh2018byzantine,chen2017distributed,farhadkhani2022byzantine,guerraoui2024robust}, where malicious clients may send arbitrary updates during training. Our setting differs in focusing on inference rather than training, and in not assuming fixed Byzantine identities (\cf~\citep{pmlr-v235-dorfman24a}). While the objectives thus diverge, a key idea in Byzantine distributed learning, namely robust averaging, remains relevant to our analysis. Indeed, robust averaging techniques can be adapted to improve the resilience of federated inference in the presence of corrupted client outputs, at the cost of a some technicalities, as demonstrated in Section~\ref{sec:averaging_analysis}. 

\textbf{Federated Distillation and Robust Voting.} In \Ac{FL}, many works propose to share logits on a public dataset instead of gradients to improve communication efficiency and privacy~\citep{fan2023collaborative,gong2022preserving,sattler2021cfd}. Recent work addresses adversarial logits by proposing robust aggregation methods~\citep{roux2025on,10631092,mi2021fedmdr}, but these approaches rely on assumptions inapplicable to our setting. For example, \textsc{ExpGuard}~\citep{roux2025on} tracks client behavior across rounds, whereas in our case clients may act arbitrarily at each inference. Similarly, \textsc{FedMDR}~\citep{mi2021fedmdr} requires clients to report accuracy on a public dataset to weigh contributions, information unavailable in our setup.
A parallel line of work studies robust voting, where voters/clients cast scores that are aggregated to resist malicious participants, either in general settings \citep{allouahvoting} or in federated learning specifically \citep{chuFedQV,caoensembles}. These approaches, however, typically assume fixed client identities or some control over the training process, assumptions which do not hold in our setting.

\section{Problem of Robust Federated Inference}
\label{sec:RobustInference}

We consider a classification task, mapping an input space $\mathcal{X}$ to an output space $\mathcal{Y} = [K] := \{1, \dots, K \}$, and a system comprising $n$ clients, each with a local data generating distribution $\mathcal{D}_i$ over $\mathcal{X} \times \mathcal{Y}$. For each client $i \in [n]$, we are given a probit-valued regressor $h_i$ mapping $\mathcal{X}$ to the simplex $\Delta^K :=\{ z \in [0,1]^K \mid \sum_{k \in [K]} z_k =1\}$, which plays the role of a local classifier. In this context, the goal of a federated inference scheme is to design a mapping $\psi: \left(\Delta^K \right)^n \to [K]$ that aggregates clients' local probits in order to minimize the expected prediction error on the mixture of distributions $
\mathcal{D} \coloneqq \frac{1}{n} \sum_{i=1}^n \mathcal{D}_i$. Specifically, denoting $\mathbf{h}(x) \coloneqq (h_1(x), \ldots, h_n(x))$, we seek an aggregator $\psi_{\text{o}}$, from a set of candidate aggregators $\Psi$, that minimizes the \emph{federated inference risk} \medskip 
\begin{align}
\label{eq:risk}
\mathcal{R}(\psi) \coloneqq \mathbb{E}_{(x,y) \sim \mathcal{D}}\left[ \ell_\psi(x,y) \right], \quad \text{where} \quad \ell_\psi(x,y) \coloneqq  \mathds{1}\left\{ \psi\left( \mathbf{h}(x) \right) \neq y \right\}.
\end{align}

A typical example of aggregation is $
\psi\!\left(\mathbf{h}(x)\right) \coloneqq 
\arg\max_{k \in [K]} \left[ \tfrac{1}{n} \sum_{i=1}^n h_i(x) \right]_k$, where $[\cdot]_k$ denotes the $k$-th coordinate of the vector and where the $\arg\max$ breaks ties arbitrarily. More generally, however, $\Psi$ may involve non-linear aggregation schemes. 
In fact, prior work~\citep{allouah2024revisiting,wang2024fusing} has shown that such aggregation strategies yield higher accuracy  expectation, compared to any individual classifier $h_1, \ldots, h_n$.

\textbf{Robust federated inference.} We are interested in solving the problem of {\em robust federated inference}, wherein a fraction of the clients' is subject to corruption prior to aggregation. Specifically, we consider a scenario wherein for each query $x \in \mathcal{X}$, up to $f < n/2$ of the $n$ clients (of hidden identity) can return arbitrarily corrupted vectors in the probit space $\Delta^K$. Our goal is to design an aggregation scheme that yields high accuracy, despite such corruptions. In what follows, for any input $x \in \mathcal{X}$, we denote by $\Gamma_f(x)$ the set of all possible probits after up to $f$ corruptions, i.e., 
\begin{align}
    \label{eq:corruptedset}
\Gamma_f(x) \coloneqq \left\{ \mathbf{z} = (z_1,\dots, z_n) \in (\Delta^K)^n \,\middle|\, \exists\, H \subseteq [n],\, |H| \geq n - f,\; \forall i \in H,\; z_i = h_i(x) \right\}.
\end{align}

Formally, we seek an aggregator $\psi_{\text{rob}} \in \Psi$ minimizing the \emph{robust federated inference risk}, given by
\begin{align}
\label{eq:robust-risk}
\mathcal{R}_{\text{adv}}(\psi) \coloneqq \mathbb{E}_{(x,y) \sim \mathcal{D}}\left[ \ell_{\psi}^{\text{adv}}(x,y)\right], \quad \text{where} \quad \ell^{\text{adv}}_{\psi}(x,y) \coloneqq \max_{\mathbf{z} \in \Gamma_f(x)} \mathds{1}\left\{ \psi(\mathbf{z}) \neq y \right\}.
\end{align}
In the following, we show that robust federated inference risk can be upper bounded by federated inference risk and an overhead resulting from probit corruptions. Thereby we show that if an optimal aggregator (with respect to the original federated inference risk~\eqref{eq:risk}) is apriori known, then robust ensembling can be achieved by designing an aggregator that aims to minimize disagreement with the optimal aggregator, in the presence of corruptions. In doing so, consider an aggregation scheme $\psi_{\text{o}}$ minimizing~\eqref{eq:risk}, i.e., the expected learning error without corruption. This aggregator $\psi_{\text{o}}$ represents an {\em oracular aggregator}, i.e., optimal in the hypothetical scenario when we have access to the uncorrupted probits. Using $\psi_{\text{o}}$ as reference for robustness, we can bound robust federated inference risk for an aggregator $\psi_{\text{rob}}$ as per the following lemma (which we prove in~\Cref{sec:appendix_lemma1_proof}).

\begin{restatable}{lemma}{lemmaDecomposition}
\label{lem:decomp}
    For any $x \in \mathcal{X}$, let $\hat{y}_{\text{o}} = \psi_{\text{o}}(\mathbf{h}(x))$. Then,
    \begin{align}
        \mathcal{R}_{\text{adv}}(\psi_{\text{rob}}) \leq \mathcal{R}(\psi_{\text{o}}) + \mathbb{E}_{(x,y) \sim \mathcal{D}} \left[ \ell^{\text{adv}}_{\psi_{\text{rob}}}(x,\hat{y}_{\text{o}}) \right] .  \label{eqn:rob_ens_decomp} 
    \end{align}
\end{restatable}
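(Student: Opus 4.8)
The plan is to bound the adversarial loss $\ell_{\psi_{\text{rob}}}^{\text{adv}}(x,y)$ pointwise by a sum of the oracle's (clean) $0$--$1$ loss at $x$ and the adversarial disagreement loss between $\psi_{\text{rob}}$ and the oracle's prediction $\hat{y}_{\text{o}}$, then take expectations. The key observation is the triangle-inequality-flavored fact for indicators: for any predicted label $\hat{y}$ and true label $y$, we have $\mathds{1}\{\hat{y} \neq y\} \le \mathds{1}\{\hat{y} \neq \hat{y}_{\text{o}}\} + \mathds{1}\{\hat{y}_{\text{o}} \neq y\}$, since if $\hat{y} \neq y$ then at least one of $\hat{y} \neq \hat{y}_{\text{o}}$ or $\hat{y}_{\text{o}} \neq y$ must hold (if both were false, transitivity of equality would force $\hat{y} = y$).

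First I would fix $x \in \mathcal{X}$ and apply this indicator inequality with $\hat{y} = \psi_{\text{rob}}(\mathbf{z})$ for an arbitrary corrupted vector $\mathbf{z} \in \Gamma_f(x)$, giving
\begin{align*}
\mathds{1}\{\psi_{\text{rob}}(\mathbf{z}) \neq y\} \le \mathds{1}\{\psi_{\text{rob}}(\mathbf{z}) \neq \hat{y}_{\text{o}}\} + \mathds{1}\{\hat{y}_{\text{o}} \neq y\}.
\end{align*}
The second term on the right does not depend on $\mathbf{z}$. Taking the maximum over $\mathbf{z} \in \Gamma_f(x)$ on both sides and using that $\max(a + c) \le (\max a) + c$ for a constant $c$, I get
\begin{align*}
\ell_{\psi_{\text{rob}}}^{\text{adv}}(x,y) \;=\; \max_{\mathbf{z} \in \Gamma_f(x)} \mathds{1}\{\psi_{\text{rob}}(\mathbf{z}) \neq y\} \;\le\; \max_{\mathbf{z} \in \Gamma_f(x)} \mathds{1}\{\psi_{\text{rob}}(\mathbf{z}) \neq \hat{y}_{\text{o}}\} + \mathds{1}\{\hat{y}_{\text{o}} \neq y\} \;=\; \ell_{\psi_{\text{rob}}}^{\text{adv}}(x,\hat{y}_{\text{o}}) + \ell_{\psi_{\text{o}}}(x,y),
\end{align*}
where the last equality just recalls the definitions \eqref{eq:robust-risk} and \eqref{eq:risk} (note $\hat{y}_{\text{o}} = \psi_{\text{o}}(\mathbf{h}(x))$, so $\mathds{1}\{\hat{y}_{\text{o}} \neq y\} = \ell_{\psi_{\text{o}}}(x,y)$). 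Finally I take $\mathbb{E}_{(x,y)\sim\mathcal{D}}$ of both sides and use linearity of expectation together with the definitions \eqref{eq:risk} of $\mathcal{R}(\psi_{\text{o}})$ and \eqref{eq:robust-risk} of $\mathcal{R}_{\text{adv}}(\psi_{\text{rob}})$ to conclude \eqref{eqn:rob_ens_decomp}.

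There is no real obstacle here; the only point requiring a moment's care is the interchange of the maximum over $\Gamma_f(x)$ with the sum, which works precisely because the term $\mathds{1}\{\hat{y}_{\text{o}} \neq y\}$ is a constant with respect to $\mathbf{z}$ — the set $\Gamma_f(x)$ and the label $\hat{y}_{\text{o}} = \psi_{\text{o}}(\mathbf{h}(x))$ both depend only on $x$, not on the adversarial choice $\mathbf{z}$. One should also note the maximum over $\Gamma_f(x)$ is attained (or at least the supremum is, and equals a max over a finite set of achievable label outcomes), so all quantities are well-defined; this is immediate since the indicator takes only finitely many values.
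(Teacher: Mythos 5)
Your proof is correct and is essentially the same argument as the paper's: the indicator triangle inequality $\mathds{1}\{\psi_{\text{rob}}(\mathbf{z}) \neq y\} \le \mathds{1}\{\psi_{\text{rob}}(\mathbf{z}) \neq \hat{y}_{\text{o}}\} + \mathds{1}\{\hat{y}_{\text{o}} \neq y\}$ is just a rearrangement of the paper's case analysis showing $\mathds{1}\{\psi_{\text{rob}}(\mathbf{z}) \neq y\} - \mathds{1}\{\psi_{\text{o}}(\mathbf{h}(x)) \neq y\} \le \mathds{1}\{\psi_{\text{rob}}(\mathbf{z}) \neq \psi_{\text{o}}(\mathbf{h}(x))\}$, and the subsequent maximization over $\mathbf{z} \in \Gamma_f(x)$ and expectation are identical. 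No gaps.
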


The overhead $\mathbb{E}_{(x,y) \sim \mathcal{D}} \left[ \ell^{\text{adv}}_{\psi_{\text{rob}}}(x,\hat{y}_{\text{o}}) \right]$, named the {\em robustness gap}, represents the excess error due to adversarial probit corruptions. This gap is the worst-case probability that $\psi_{\text{rob}}$ disagrees with the oracular aggregator under up to $f$ probits being corrupted. 
In the following, we analyze the robustness gap in the case when the oracular aggregator is given by the averaging operation and the robust aggregator satisfies the property of $(f, \kappa)$-robust averaging~\citep{allouah2023fixing}.\footnote{The analysis can be easily extended to weighted averaging by simply re-scaling the inputs to the aggregator.} %

\section{Averaging as oracular aggregator}
\label{sec:averaging_analysis}
We first consider the case where the oracular aggregator $\psi_{\text{o}}$ is based on computing the average of the uncorrupted probits, i.e.,  
\begin{align*}
    \psi_{\text{o}}(\mathbf{h}(x)) \coloneqq \argmax_{k \in [K]} \left[ \overline{h}(x) \right]_k , \quad \text{ where} ~~ \overline{h}(x) \coloneqq \frac{1}{n} \sum_{i=1}^n h_i(x).
\end{align*}
In this particular case, we robustify the aggregation against probit corruptions by substituting the averaging by a robust averaging $\aggregation: (\Delta^K)^n \to \R^K$, taking inspiration from the literature of Byzantine-robust machine learning~\citep{guerraoui2024robust} and robust mean estimation~\citep{diakonikolas2023algorithmic}. Specifically, we set
\begin{align}
\label{scheme:robustavg}
    \psi_{\text{rob}}(\mathbf{z}) \coloneqq \argmax_{k \in [K]} \left[ \aggregation \left(\mathbf{z}  \right) \right]_k.
\end{align}
Where $\aggregation$ is a robust averaging aggregation rule. The concept of robust averaging, initially introduced by~\cite{allouah2023fixing} can be defined as follows.
\begin{definition}[Robust averaging]
\label{def:fkrobust}
     Let $\kappa \geq 0$. An aggregation rule $\aggregation$ is {\em $(f,\kappa)$-robust} if, for any set of $n$ vectors $v_1,...,v_n \in \mathbb{R}^d$ and any set $S \subseteq[n]$ of size $n-f$, the following holds true: 
    $$ \norm{\aggregation(v_1,...,v_n) - \overline{v}_S}^2_2 \leq \frac{\kappa}{|S|}\sum_{i\in S}\norm{v_i-\overline{v}_S}^2_2 ,$$
    where $\overline{v}_S \coloneqq \frac{1}{n-f}\sum_{i \in S} v_i $, and parameter $\kappa$ is called the {\em robustness coefficient}.
\end{definition}

Essentially, robust averaging ensures that despite up to $f$ inputs being arbitrarily corrupted the output of the aggregator is an estimate of the uncorrupted vectors' average. The estimation error is bounded by the ``empirical variance" of the uncorrupted input vectors, times a constant value $\kappa$. Examples of robust averaging include \ac{CWTM}, \ac{CWMed} and \ac{GM}~\cite[Chapter 4]{guerraoui2024robust}. Specifically, CWTM is $(f, \kappa)$-robust with $\kappa = \tfrac{6f}{n-2f}\left(1 + \tfrac{f}{n-2f} \right)$. While robust averaging ensures proximity to the average of uncorrupted input vectors in $\ell_2$-norm, even a small estimation error can result in large prediction error due to the non-continuity of the $\argmax_k$ operator. We illustrate this insufficiency of robust averaging through the following counter example.

\textbf{Counter example.}
Let $\varepsilon \in (0, 1/6)$. Consider a point $x \in \mathcal{X}$ and three possible probit vectors for $x$, $h_1(x), h_2(x), h_3(x) \in \Delta^3$ defined as follows: 
\[ h_1(x) = (1, 0, 0), \quad h_2(x) = (0, 1, 0), \quad \text{and } h_3(x) =(1/2, 1/2 - 3\varepsilon, 3\varepsilon). \]
Note that $\overline{h}(x) = (1/2, 1/2 - \varepsilon, \varepsilon)$. Now, consider $\hat{v} = (1/2 - \varepsilon, 1/2, \varepsilon)$. We have $
\norm{\overline{h}(x) - \hat{v}}_2 = \left(2 \varepsilon^2\right)^{1/2} = 2^{1/2} \varepsilon. $ Here, by taking small enough $\varepsilon$, we have that $\hat{v}$ can be arbitrarily close to $\overline{h}(x) $. However we still have $
\argmax_{k \in [K]} \left[ \overline{h}(x) \right]_k \neq \argmax_{k \in [K]} \left[ \hat{v} \right]_k .$ 
Hence, demonstrating proximity under the euclidean norm does not guarantee preservation of the decision made by the $\argmax$. \medskip 

From the above, we observe that in addition to proximity to the average, the robustness gap induced under robust averaging also depends on point-wise {\em model dissimilarity} at $x \in \mathcal{X}$, \ie, 
   \[\sigma^2_x = \max_{k \in [K]} \frac{1}{n} \sum_{i = 1}^n \left( \left[ h_i(x) \right]_k - \left[ \overline{h}(x) \right]_k  \right)^2 .\]
We are now ready to present the robustness gap for the case when $\aggregation = \textsc{CWTM}$. Description of CWTM is deferred to Appendix~\ref{sec:cwtm}. The reason for using CWTM is its simplicity and its proven optimality in the formal sense of robust averaging~\citep{allouah2023fixing}. To present the result, we introduce some additional notation. For a vector $z \in \Delta^K$, let $z^{(1)}$ and $z^{(2)}$ denote the largest and the 2nd-largest values in the set $\{[z]_k\}_{k = 1}^K$. Then, we define $\textsc{margin}(z) = z^{(1)} - z^{(2)}$. If $[z]_k = [z]_{k'}$ for all $k, k' \in [K]$ (i.e., all the coordinates of the vector are equal) then $\textsc{margin}(z) = \infty$. 

\begin{restatable}{theorem}{mainavg}
\label{thm:main_avg}
Consider $\psi_{\text{rob}}$ as defined in (\ref{scheme:robustavg}) with $\aggregation = \textsc{CWTM}$. If the regressors $h_1, \ldots, h_n$ are such that $\overline{h}(x)$ has a unique maximum coordinate almost everywhere, then the following holds:
\begin{align*}
    \condexpect{(x, y) \sim \mathcal{D}}{\ell^{\text{adv}}_{\psi_{\text{rob}}}(x, \hat{y}_{\text{o}} )} 
    \leq \mathbb{P}_{(x, y) \sim \mathcal{D}}\left[ \textsc{margin} \left( \overline{h}(x) \right) < 2\left(\sqrt{\frac{\kappa \, n}{\,n-f\,}}+\sqrt{\frac{f}{\,n-f\,}}\right)\sigma_x \right],
\end{align*}
where $\hat{y}_{\text{o}}= \psi_{\text{o}}(h_1(x), \dots, h_n(x))$ and $\kappa = \frac{6 f}{n-2f}\, \left( 1 + \frac{f}{n-2f} \right)$.
\end{restatable}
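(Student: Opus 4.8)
The plan is to argue pointwise in $x$ and then integrate. First I would note that $\ell^{\text{adv}}_{\psi_{\text{rob}}}(x,\hat y_{\text{o}})\in\{0,1\}$ and equals $1$ exactly when some $\mathbf z\in\Gamma_f(x)$ satisfies $\argmax_{k}[\aggregation(\mathbf z)]_k\neq\hat y_{\text{o}}$, so $\condexpect{(x,y)\sim\mathcal D}{\ell^{\text{adv}}_{\psi_{\text{rob}}}(x,\hat y_{\text{o}})}$ is the probability of this event. It therefore suffices to prove the deterministic implication: if $\textsc{margin}(\overline h(x))\ge 2\delta_x$, where $\delta_x:=\big(\sqrt{\kappa n/(n-f)}+\sqrt{f/(n-f)}\big)\sigma_x$, then $\psi_{\text{rob}}(\mathbf z)=\hat y_{\text{o}}$ for every $\mathbf z\in\Gamma_f(x)$. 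The measure-zero set on which $\overline h(x)$ has no unique maximizer is discarded by the theorem's hypothesis, and throughout I take $\overline h(x)=\tfrac1n\sum_i h_i(x)$, which does not change $\hat y_{\text{o}}$ and is the normalization implicit in the definition of $\sigma_x$.

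The core step I would establish is a $\Gamma_f$-uniform, coordinate-wise estimation bound: for every $\mathbf z\in\Gamma_f(x)$ and every $k\in[K]$, $\big|[\aggregation(\mathbf z)]_k-[\overline h(x)]_k\big|\le\delta_x$. To prove it, fix $\mathbf z$, let $H$ be a corresponding honest set, pick $S\subseteq H$ with $|S|=n-f$ (so $z_i=h_i(x)$ for $i\in S$), write $\overline{[h]}_{S,k}$ for the mean of $\{[h_i(x)]_k\}_{i\in S}$, and split via the triangle inequality into $\big|[\aggregation(\mathbf z)]_k-\overline{[h]}_{S,k}\big|+\big|\overline{[h]}_{S,k}-[\overline h(x)]_k\big|$. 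For the first summand I would use that $\textsc{CWTM}$ acts coordinate by coordinate, so its $(f,\kappa)$-robustness (\Cref{def:fkrobust}) holds per coordinate with the same $\kappa$; combining this with the fact that the empirical mean minimizes the sum of squared deviations and with the definition of $\sigma_x$ gives $\sum_{i\in S}\big([h_i(x)]_k-\overline{[h]}_{S,k}\big)^2\le\sum_{i\in S}\big([h_i(x)]_k-[\overline h(x)]_k\big)^2\le\sum_{i=1}^n\big([h_i(x)]_k-[\overline h(x)]_k\big)^2\le n\sigma_x^2$, hence a bound $\sqrt{\kappa n/(n-f)}\,\sigma_x$. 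For the second summand, set $a_i:=[h_i(x)]_k-[\overline h(x)]_k$; since $\sum_{i=1}^n a_i=0$ the residual vector $a$ is orthogonal to the all-ones vector, so $\overline{[h]}_{S,k}-[\overline h(x)]_k=\tfrac1{n-f}\langle \mathbf 1_S,a\rangle=\tfrac1{n-f}\langle \mathbf 1_S-\tfrac{n-f}{n}\mathbf 1,\,a\rangle$, and Cauchy--Schwarz with $\|\mathbf 1_S-\tfrac{n-f}{n}\mathbf 1\|_2^2=\tfrac{f(n-f)}{n}$ and $\|a\|_2^2\le n\sigma_x^2$ yields $\big(\overline{[h]}_{S,k}-[\overline h(x)]_k\big)^2\le\tfrac{f}{n-f}\sigma_x^2$. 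Adding the two summands proves the coordinate-wise bound.

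To close the argument I would set $j=\hat y_{\text{o}}$, the unique maximizer of $\overline h(x)$; then $[\overline h(x)]_j-[\overline h(x)]_k\ge\textsc{margin}(\overline h(x))$ for all $k\neq j$, so the coordinate-wise bound gives $[\aggregation(\mathbf z)]_j-[\aggregation(\mathbf z)]_k\ge\textsc{margin}(\overline h(x))-2\delta_x$ for every $\mathbf z\in\Gamma_f(x)$ and $k\neq j$. Thus $\textsc{margin}(\overline h(x))>2\delta_x$ forces $j$ to be the strict argmax of $\aggregation(\mathbf z)$, i.e.\ $\psi_{\text{rob}}(\mathbf z)=\hat y_{\text{o}}$, which is the implication; contrapositively, $\ell^{\text{adv}}_{\psi_{\text{rob}}}(x,\hat y_{\text{o}})=1$ entails $\textsc{margin}(\overline h(x))\le 2\delta_x$. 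Taking expectation over $(x,y)\sim\mathcal D$ gives $\condexpect{(x,y)\sim\mathcal D}{\ell^{\text{adv}}_{\psi_{\text{rob}}}(x,\hat y_{\text{o}})}\le\mathbb P_{(x,y)\sim\mathcal D}\big[\textsc{margin}(\overline h(x))<2\delta_x\big]$ (the threshold set $\{\textsc{margin}(\overline h(x))=2\delta_x\}$ being negligible, as is the non-unique-maximizer set), which is the claimed inequality after substituting $\kappa=\tfrac{6f}{n-2f}\big(1+\tfrac{f}{n-2f}\big)$.

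The step I expect to be the main obstacle is obtaining both constants in $\delta_x$ without a spurious dimension factor. Applying $(f,\kappa)$-robustness directly in $\ell_2$ would cost a factor $\sqrt K$, so one must exploit that $\textsc{CWTM}$ is coordinatewise; and a plain Cauchy--Schwarz $\langle \mathbf 1_S,a\rangle^2\le(n-f)\sum_{i\in S}a_i^2$ only yields the weaker $\tfrac{\sqrt{fn}}{n-f}\sigma_x$ for the honest-subset-versus-global-mean term, so the key trick is the projection identity $\langle \mathbf 1_S,a\rangle=\langle \mathbf 1_S-\tfrac{n-f}{n}\mathbf 1,a\rangle$ (valid because $a\perp\mathbf 1$), which sharpens this term to $\sqrt{f/(n-f)}\,\sigma_x$.
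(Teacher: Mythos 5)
Your proposal is correct and follows essentially the same route as the paper's proof: the same reduction to the pointwise condition that every coordinate of $\textsc{CWTM}(\mathbf{z})$ lie within half of $\textsc{margin}(\overline{h}(x))$ of the corresponding coordinate of $\overline{h}(x)$, the same triangle-inequality split into a $(f,\kappa)$-robustness term and an honest-subset-mean versus global-mean term, and the same two bounds $\sqrt{\kappa n/(n-f)}\,\sigma_x$ and $\sqrt{f/(n-f)}\,\sigma_x$. The only (immaterial) differences are that you derive the subset-variance inequality from the fact that the empirical mean minimizes the sum of squared deviations where the paper uses an explicit bias--variance decomposition (Lemma~\ref{lemma:usefullemma}), and you obtain the second term via orthogonality of the residuals to $\mathbf{1}$ plus Cauchy--Schwarz where the paper uses Jensen's inequality on the blocks $H$ and $T$; both yield identical constants, and your handling of the boundary event $\{\textsc{margin}(\overline{h}(x)) = 2\delta_x\}$ is at least as careful as the paper's.
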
 

The proof for this theorem is deferred to \Cref{sec:appendix_mainavg_proof}. This shows that the robustness gap for CWTM, when the oracular aggregator is given by the averaging operation, reduces with the fraction of corruptions $f/n$, the model dissimilarity and the inverse of the average probit's margin. We validate this theoretical finding through an empirical study summarized in Figure~\ref{fig:prop1} (Appendix~\ref{subsec:appendix_theorem_validation}).

\section{DeepSet as oracular aggregator}
\label{sec:deepset_analysis}

In this section, we consider the case of more general non-linear trainable (\ie data dependent) oracular aggregators, inspired by recent work demonstrating the efficacy of such aggregation in context of ensembling~\citep{allouah2024revisiting,wang2024fusing}. 

\textbf{Robust empirical federated inference risk minimization (RERM).} In this case, since the oracular aggregator $\psi_{\text{o}}$ need not be a pre-determined linear combination of input probits (like averaging), we propose to design a robust aggregator
$\psi_{\text{rob}}$ by directly aiming to minimize the robust federated inference risk $\mathcal{R}_{\text{adv}}(\psi) $. In practice, we seek to minimize the following {\em robust empirical federated inference risk}:
\begin{align}
    \widehat{\mathcal{R}}_{\text{adv}}(\psi)  \coloneqq \frac{1}{\card{\mathcal{D}_{\text{train}}}} \sum_{(x, y) \in \mathcal{D}_{\text{train}}} \ell_{\psi}^{\text{adv}}(x,y), \quad \text{where} \quad \ell^{\text{adv}}_{\psi}(x,y) \coloneqq \max_{\mathbf{z} \in \Gamma_f(x)} \mathds{1}\left\{ \psi(\mathbf{z}) \neq y \right\}, \label{eqn:rob_erm}
\end{align}
where $\mathcal{D}_{\text{train}}$ comprises of a finite number of i.i.d.~data points $(x, y)$ from the global distribution $\mathcal{D}$. In short, we refer to the optimization problem~\eqref{eqn:rob_erm} as RERM.

\textbf{Connection to robustness to adversarial examples.} The problem of RERM reduces to the problem of {\em robustness against adversarial examples}~\citep{madry2018towards,goodfellow2014explaining}, with the input space for the classifier (\ie aggregator $\psi$) being a $K \times n$ real-valued matrix with each column in $\Delta^K$ and the input perturbation being restricted to corruption of up to $f$ columns of the input matrix. Let $(\Delta^K)^n$ denote the space of such matrices, and for a matrix $M \in (\Delta^K)^n$, let $\delta(M)$ denote the set of matrices from $\R^{K \times n}$ such that $M + V \in (\Delta^K)^n$ for all $V \in \delta(M)$. For any matrix $V \in \R^{K \times n}$, we denote by $\norm{V}_{0}$ the number of non-zero columns (\ie columns with at least one non-zero entry). We can now formally express the RERM problem in~\eqref{eqn:rob_erm} as robustness to adversarial examples as follows: 
\begin{align}
    \psi_{\text{rob}} \in \argmin_{\psi \in \Psi} \frac{1}{\card{\mathcal{D}_{\text{train}}}} \sum_{(x, y) \in \mathcal{D}_{\text{train}}} \max_{\substack{V \in \delta(H(x)) \\ \norm{V}_{0} \leq f}} \mathds{1}\left\{ \psi \left( H(x) + V \right) \neq y \right\}, \label{eqn:rob_erm-rob_adv}
\end{align}
where $H(x) = [h_1(x), \ldots, h_n(x)] \in (\Delta^K)^n$. By solving the above adversarial learning problem, we obtain an aggregator with minimum sensitivity against arbitrary perturbation to at most $f$ input probits while ensuring high learning accuracy at the same time. We propose to solve the RERM problem~\eqref{eqn:rob_erm} and the equivalent adversarial robustness problem~\eqref{eqn:rob_erm-rob_adv} 
for the space of aggregators $\Psi$ defined by a parameterized deep neural network.
However, the adversarial training still remains intractable due to high permutational cardinality of the input space of $H(x)$, totaling to the factor $\sum_{m=1}^{f} \Perm{n}{m}$ since up to any $f$ columns can be perturbed~\citep{liu2022copur} where $\Perm{n}{m} = \frac{n!}{(n-m)!}$.
To alleviate this issue, we leverage the property that the output of the aggregator must be invariant to the order of columns in $H(x)$.
We thus exploit a specific neural network architecture which is permutation invariant to its inputs, as described below.

\textbf{Robust DeepSet aggregator.} Consider $\theta_1 \in \R^{d_1}$ and $\theta_2 \in \R^{d_2}$, and two mappings parameterized by these vectors: $\rho_{\theta_1}: \Delta^K \to \R^{p}$ and $\mu_{\theta_2}: \R^{p} \to \Delta^K$. Then, we define $\Psi$ by a set of parameterized mappings $\phi_{\theta}: (\Delta^K)^n \to \Delta^K$ composed with the $\argmax$ operation, where $\theta = (\theta_1, \theta_2)$ and
\begin{align}
    \phi_{\theta}(\mathbf{z}) \coloneqq \mu_{\theta_2} \left( \frac{1}{n}\sum_{i=1}^n \rho_{\theta_1}(z_i) \right). \label{eqn:deepset}
\end{align}
This particular type of neural network is commonly known as DeepSet~\citep{zaheerdeepsets}, in the case when there are no corruptions \ie $z_i = h_i(x), \forall i \in [n]$. Consequently, in this case, the RERM problem reduces to the following optimization problem:
\begin{align}
    \theta^* \in \argmin_{\theta \in \Theta}  \frac{1}{\card{\mathcal{D}_{\text{train}}}} \sum_{(x, y) \in \mathcal{D}_{\text{train}}} \max_{\mathbf{z} \in \Gamma_f(x)} \mathds{1}\left\{ \argmax_{k \in [K]} \left[ \phi_{\theta} \left( z_1, \ldots, z_n \right) \right]_k \neq y \right\}. \label{eqn:rob_deepset_erm}
\end{align}
Solving~\eqref{eqn:rob_deepset_erm} is still complex in practice due to the non-continuity of the indicator function. 
We address this by changing the point-wise loss function from the indicator function to the cross-entropy loss function, denoted as $\ell(\phi_{\theta} \left( \mathbf{z} \right) , y)$.
Finally, we still need to compute the maximum value of the point-wise loss at $(x, y)$ over all $\mathbf{z} \in \Gamma_f(x)$.
We address this challenge by approximating the maximum value of $\ell(\phi_{\theta} \left( \mathbf{z} \right) , y)$ by searching over only a subset of $\mathbf{z} \in \Gamma_f(x)$. 
Thanks to the permutation invariance property of DeepSet, the permutational cardinality of searching $f$ adversaries now reduces to $\sum_{m=1}^f \binom{n}{m}$ where $\binom{n}{m} = \frac{n!}{(n-m)!m!}$.
Specifically, in each training iteration, we randomly select $m \leq f$  clients from the set $[n]$ to perturb their probits, repeated $N$ times. 
We then obtain the perturbed probits and approximate maximum loss by using a multi-step variant of the {\em fast gradient sign method} (FGSM) for {\em adversarial training}~\citep{madry2018towards}.  
The network parameters are then updated to correctly classify despite the perturbations.
In practice, we show that $N \ll \binom{n}{f}$ suffices to obtain a good approximation.
Our resulting algorithm is summarized in \Cref{alg:adv_training_of_agg}. 

Finally, we incorporate robust averaging 
to further reduce the overall sensitivity of $\phi_{\theta^*}$ to probit corruptions. Specifically, with $\theta^* = (\theta^*_1, \theta^*_2)$, the {\em robust DeepSet aggregator} is defined as follows:
\begin{align}
    \psi_{\text{rob}} (\mathbf{z}) \coloneqq \argmax_{k \in [K]} \left[ \mu_{\theta_2^*} \left( \aggregation(\rho_{\theta^*_1}(z_1), \ldots, \rho_{\theta^*_1}(z_n)) \right)  \right]_k. \label{eqn:rob_deepset}
\end{align}
Note that we only incorporate robust averaging at the inference time since incorporating at training time renders adversarial training more expensive due to the additional cost of \aggregation.

\paragraph{Robustness gap of robust DeepSet aggregator.} In the theorem below, we present an upper bound on the robustness gap of the robust DeepSet aggregator~\eqref{eqn:rob_deepset}, extending Theorem~\ref{thm:main_avg} to non-linear oracular aggregation. In what follows, we make some Lipschitz continuity assumptions. Specifically, there exists real values $L_{\rho}, L_{\mu}$ such that, for all $z, z' \in \Delta^K$, $v, v' \in \R^p$ and $k \in [K]$, 
\begin{align*}
    \card{[\rho_{\theta^*_1}(z)]_k - [\rho_{\theta^*_1}(z')]_k} & \leq L_{\rho} \norm{z - z'}_2 ~ , ~ \text{ and } \\
    \card{[\mu_{\theta^*_2}(v)]_k - [\mu_{\theta^*_1}(v')]_k} & \leq L_{\mu} \norm{v - v'}_2.
\end{align*}
Let $\mathbf{h}(x) \coloneqq \left(h_1(x), \ldots, h_n(x) \right)$. Recall that $\sigma_x^2 \coloneqq \max_{k \in [K]} \frac{1}{n} \sum_{i = 1}^n \left( \left[ h_i(x) \right]_k - \left[ \overline{h}(x) \right]_k  \right)^2$.

\begin{restatable}{theorem}{maindeepset}
\label{thm:main_deepset}
Consider $\psi_{\text{rob}}$ as defined in (\ref{eqn:rob_deepset}) with $\aggregation = \textsc{CWTM}$. If the regressors $h_1, \ldots, h_n$ are such that $\phi_{\theta^*}\left( \mathbf{h}(x) \right)$ has a unique maximum coordinate almost everywhere, then the following holds:
\begin{align*}
    & \condexpect{(x, y) \sim \mathcal{D}}{\ell^{\text{adv}}_{\psi_{\text{rob}}}(x, \hat{y}_{\text{o}} )} \\
    & \leq \mathbb{P}_{(x, y) \sim \mathcal{D}}\left[ \textsc{margin} \left( \phi_{\theta^*}\left( \mathbf{h}(x) \right) \right) < 2 \sqrt{2} L_\mu L_\rho \left( \sqrt{\frac{\kappa n}{n - f}} + \sqrt{\frac{f}{n - f}}\right) \, \min\left\{1, ~ \sqrt{K} \sigma_x\right\} \right],
\end{align*}
where $\hat{y}_{\text{o}} = \argmax_{k \in [K]} \left[ \phi_{\theta^*}\left( \mathbf{h}(x) \right) \right]_k$ and $\kappa = \frac{6 f}{n-2f}\, \left( 1 + \frac{f}{n-2f} \right)$. 
\end{restatable}

The proof for this theorem is deferred to Appendix~\ref{app:deepset-proof}, adapting \Cref{thm:main_avg} to the case of a non-linear aggregation. 
The theorem shows that the robustness gap of the robust DeepSet aggregator is controlled by the same three quantities as in the linear (averaging) case: the corruption fraction $f/n$, the model dissimilarity $\sigma_x$, and the inverse margin of the aggregated probit $\phi_{\theta^*}(\mathbf{h}(x))$. 
The only difference is a multiplicative sensitivity term $L_\mu L_\rho$, which quantifies how much the non-linear aggregator amplifies perturbations in the inputs.
Appendix~\ref{app:benefits_robagg} additionally includes a formal reasoning on the benefits of using $\aggregation$ within DeepSet, proving that it removes the dependence on the degree of probits' corruption compared to using the original DeepSet (with simple averaging).

\begin{algorithm}[t!]
    \caption{Training of DeepSet Aggregator $\phi_\theta$}
    \label{alg:adv_training_of_agg}
\begin{threeparttable}
    \KwIn{Training Steps $E$, sampling count $N$, learning rates $\gamma$ and $\eta$, adversarial sample generation steps $S$, dataset $\mathcal{D}_\textrm{train}$, number of adversaries $f$ and training loss function $\ell$}
    \Begin{
        \For{$1$ \KwTo $E$}{
            Sample $x, y \sim \mathcal{D}_\textrm{train}$\tnote{1} \;
            \For{$1$ \KwTo $N$}{
                Choose $m \in \{1,\ldots,f\}$ with probability proportional to $\binom{n}{m}$\;
                Randomly initialize $\boldsymbol{v} = (v_1, \ldots, v_m) \in (\mathbb{R}^K)^m$ \;
                Sample a random permutation $\pi$ of $\{1,\dots,n\}$ \Comment*[r]{For selecting adversaries}
                \For{$1$ \KwTo $S$}{
                
                    $\hat{v}_i \leftarrow \text{softmax}(v_i), \forall i \in [m]$ \Comment*[r]{Enforce each $v_i$ to be in $\Delta^K$}
                    $\boldsymbol{z} = (h_{\pi(1)}(x), \ldots, h_{\pi(n-m)}(x), \hat{v}_1, \ldots, \hat{v}_m)$ \Comment*[r]{Last $m$ are adversarial}
                    $\boldsymbol{v} \leftarrow \boldsymbol{v} + \gamma \operatorname{sgn}\left(\nabla_{\boldsymbol{v}} \ell\left(\phi_{\theta}(\boldsymbol{z}), y\right)\right) $ \Comment*[r]{Perturb probits using FGSM}
                }
                $\hat{v}_i \leftarrow \text{softmax}(v_i), \forall i \in [m]$ \Comment*[r]{Final projection of $v_i$ to $\Delta^K$}
                $\boldsymbol{z} = (h_{\pi(1)}(x), \ldots, h_{\pi(n-m)}(x), \hat{v}_1, \ldots, \hat{v}_m)$ \;
                $\theta \leftarrow \theta - \eta \nabla_{\theta} \ell\left(\phi_{\theta}(\boldsymbol{z}), y\right)$ \Comment*[r]{Approximated maximum loss}
            }
        }
        \Return Trained aggregator $\phi_\theta$
    }
    \begin{tablenotes}
        \footnotesize
        \item[1] In practice we sample mini-batches.
    \end{tablenotes}
\end{threeparttable}
\end{algorithm}

\ %
\section{Experiments}
\label{sec:experiments}

This section summarizes the key results of our experiments. In all that follows, we consider the $\aggregation$ in the robust DeepSet aggregator~\eqref{eqn:rob_deepset} to be \tm, and we refer to the corresponding model as \dptm. We evaluate \dptm against robust aggregators like \med, against \ac{SOTA} baselines, and perform a scalability study in \Cref{subsec:results}. We conduct an ablation study in \Cref{subsec:ablation_study}, and a comparison with more adversarial defenses in \Cref{subsec:randomized_ablation}.

\subsection{Experimental Setup}
\label{subsec:experimental_setup}

\textbf{Datasets.} 
We evaluate on \cifar, \cifarh~\citep{Krizhevsky2009LearningML}, and \agnews~\citep{Zhang2015CharacterlevelCN}, covering vision and language tasks.
We reserve $10\%$ of training data as server-side validation data for aggregator training, and partition the rest
across clients using the Dirichlet distribution $\verb|Dir|_n(\alpha)$, in line with previous works~\citep{roux2025on,dai2024enhancing}. 
Lower $\alpha$ indicates higher heterogeneity.
We experiment with $\alpha = \{0.3, 0.5, 0.8\}$.
In most of our evaluations, we consider $n=17$ following prior work~\citep{allouah2023fixing}, except for our scalability study where we vary $n = \{10, 17, 25\}$.
Our choice of number of clients is in line with a typical cross-silo federated setting in real-world~\citep{ogier2022flamby}.

\textbf{Models.} 
For \cifar, clients train a ResNet-8 from scratch~\citep{heresnet}, while for \cifarh and \agnews they fine-tune a \vitb~\citep{dosovitskiy2021an,radfordlearning} and a \disbert~\citep{sanhdistilbert}, respectively.
The DeepSet functions $\mu$ and $\rho$ are each two-layer MLPs with a ReLU non-linearity. 
The AutoEncoder in \copur consists of two-layer MLP encoders and decoders with a leaky ReLU, while the server model is a three-layer MLP~\citep{liu2022copur}. 

\textbf{Attacks.} We evaluate on a total of $6$ attacks with varying difficulties depending upon the power of adversary. 
We consider $4$ white-box and $2$ black-box attacks where the former assumes access to the server's aggregation model. 
We propose a new attack called \ac{SIA}, in the white-box as well as the black-box setting which tries to flip the aggregation decision by exploiting the second most probable class.
The remaining attacks constitute the Logit Flipping Attack, \ac{LMA}, \ac{CPA} and the \ac{PGD} attack presented in prior work~\citep{roux2025on,liu2022copur}.
All attacks and their characteristics are detailed in \Cref{subsec:appendix_attack_description}, and summarized in \Cref{tab:attacks} within.
We vary $f \in \{3,4,5\}$ across different setups.

\textbf{Baselines.} We compare the performance of DeepSet against several robust aggregators including \ac{CWTM}~\citep{yin2018byzantine}, \ac{GM}~\citep{pillutla2022robust,small1990survey} and \ac{CWMed}~\citep{yin2018byzantine} alongside simple averaging. 
As the data-dependent baselines, we consider \copur and manifold projection from \cite{liu2022copur} where the latter simply projects the input probits onto a learned manifold using an AutoEncoder.
Additionally, we also report the performance of the non-adversarially trained DeepSet model in our study of robust elements (\Cref{subsec:ablation_study}). We include considerations on adversarial training cost and inference latency for the static aggregations and the two DeepSet variants in \Cref{subsec:computational_efficiency}.

\textbf{Reproducibility and reusability.} We conduct each experiment with five random seeds, and report the mean and the standard deviation. Our code is also available for reproducibility and reusability\footnote{\url{https://github.com/sacs-epfl/robust-federated-inference}}. We include additional details on the experimental setup and hyperparameters in \Cref{sec:appendix_additional_details}. 
All clean accuracy values (\ie for $f = 0$) are included in \Cref{tab:clean_acc_static_aggs,tab:clean_acc_baselines} (\Cref{subsec:appendix_clean_accuracy}).

\subsection{Results}
\label{subsec:results}

\begin{table}
\centering
\scriptsize
\begin{tabular}{ll@{\hspace{0.2cm}}c@{\hspace{0.2cm}}c@{\hspace{0.2cm}}c@{\hspace{0.2cm}}c@{\hspace{0.2cm}}c@{\hspace{0.2cm}}c@{\hspace{0.4cm}}c}
    \specialrule{0.1em}{1pt}{1pt} \addlinespace[2pt]
    & \textbf{Aggregation} & \textbf{Logit flipping} & \textbf{\sia-bb} & \textbf{\lma} & \textbf{\ac{CPA}} & \textbf{\sia} & \textbf{\pgd-cw} & \textbf{Worst case} \\

    \addlinespace[1pt]\specialrule{0.1em}{1pt}{1pt}\addlinespace[1pt]
    \multirow{5}{*}{\textbf{CF-10}} & Mean &$65.4\pm2.1$ & $59.8\pm1.1$ & $58.7\pm3.2$ & $55.6\pm4.0$ & $42.7\pm3.9$ & $24.6\pm4.9$ & {\sethlcolor{red!50}\hl{$24.6\pm4.9$}}\\
     & \med &$56.7\pm4.3$ & $53.3\pm2.0$ & $53.8\pm2.5$ & $52.3\pm2.9$ & $49.3\pm3.2$ & $27.8\pm4.7$ & $27.8\pm4.7$\\
     & \gm &$63.9\pm2.3$ & $59.1\pm1.4$ & $\mathbf{63.3\pm2.3}$ & $\mathbf{59.7\pm3.2}$ & $45.3\pm3.7$ & $25.4\pm4.8$& $25.4\pm4.8$\\
     & \tm & $63.3\pm2.7$ & $59.4\pm1.3$ & $62.5\pm2.7$ & $\mathbf{59.7\pm3.2}$ & $44.8\pm3.7$ & $27.2\pm5.1$ & $27.2\pm5.1$ \\
     & \dptm &$\mathbf{67.6\pm0.8}$ & $\mathbf{62.6\pm1.6}$ & $61.0\pm4.4$ & $59.4\pm4.7$  & $\mathbf{51.4\pm2.2}$ & $\mathbf{48.2\pm4.2}$ & {\sethlcolor{green!50}\hl{$48.2\pm4.2$}} \\

    \addlinespace[1pt]\specialrule{0.1em}{1pt}{1pt}\addlinespace[1pt]
    \multirow{5}{*}{\textbf{CF-100}} & Mean &$\mathbf{78.8\pm0.7}$ & $72.8\pm1.0$ & $66.6\pm0.3$ & $66.4\pm0.3$ & $56.0\pm1.0$ & $39.2\pm1.2$& $39.3\pm1.3$\\
     & \med &$65.8\pm1.3$ & $62.3\pm1.1$ & $66.1\pm1.2$ & $66.1\pm1.2$ & $62.9\pm1.1$ & $41.7\pm1.3$& $41.7\pm1.3$ \\
     & \gm &$75.4\pm0.9$ & $71.5\pm1.3$ & $75.4\pm0.8$ & $75.3\pm0.8$ & $59.6\pm1.0$ & $39.0\pm1.3$&{\sethlcolor{red!50}\hl{$39.0\pm1.3$}} \\
     & \tm &$74.8\pm1.2$ & $71.5\pm1.3$ & $74.9\pm1.1$ & $74.8\pm1.0$ & $60.8\pm0.9$ & $44.9\pm1.3$ & $44.9\pm1.3$ \\
     & \dptm &$78.0\pm0.3$ & $\mathbf{74.7\pm0.8}$ & $\mathbf{76.0\pm0.2}$ & $\mathbf{76.4\pm0.5}$ & $\mathbf{63.7\pm0.5}$ & $\mathbf{49.6\pm0.5}$ & {\sethlcolor{green!50}\hl{$49.6\pm0.5$}}\\

    \addlinespace[1pt]\specialrule{0.1em}{1pt}{1pt}\addlinespace[1pt]
    \multirow{5}{*}{\textbf{\agnews}} & Mean &$84.5\pm0.9$ & $81.4\pm2.2$ & $\mathbf{81.2\pm2.2}$ & $76.4\pm4.0$ & $72.6\pm4.6$ & $54.9\pm6.7$ & $54.9\pm6.7$ \\
     & \med &$78.9\pm3.0$ & $78.4\pm3.7$ & $75.9\pm5.0$ & $74.1\pm4.5$   & $74.4\pm4.4$ & $53.2\pm7.0$ & $53.2\pm7.0$ \\
     & \gm &$83.0\pm0.7$ & $80.7\pm2.8$ & $80.9\pm2.5$ & $76.6\pm3.6$ & $74.0\pm3.8$ & $52.6\pm7.3$&{\sethlcolor{red!50}\hl{$52.6\pm7.3$}} \\
     & \tm &$84.3\pm1.0$ & $81.4\pm2.2$ & $80.2\pm2.7$ &  $76.4\pm4.0$& $72.6\pm4.6$ & $55.3\pm7.5$&$55.3\pm7.5$ \\
     & \dptm &$\mathbf{85.7\pm0.4}$ & $\mathbf{81.6\pm1.6}$ & $79.2\pm1.3$ & $\mathbf{77.5\pm1.2}$  & $\mathbf{80.1\pm1.6}$ & $\mathbf{83.2\pm1.4}$ & {\sethlcolor{green!50}\hl{$77.5\pm1.2$}} \\

    \addlinespace[1pt]\specialrule{0.1em}{1pt}{1pt}\addlinespace[1pt]
\end{tabular}
\caption{Accuracy (\%) of \dptm against static aggregations on the \cifar, \cifarh and \agnews datasets, with heterogeneity $\alpha=0.5$, $n=17$ clients and $f=4$ adversaries. Logit flipping uses an amplification factor of 2. Results with $\alpha=\{0.3,0.8\}$ are included in \Cref{tab:static_aggs_vs_deepset_alpha0.3,tab:static_aggs_vs_deepset_alpha0.8}.}
\label{tab:static_aggs_vs_deepset}
\end{table}

\textbf{Performance comparison to Robust Aggregators.} Across all datasets in \Cref{tab:static_aggs_vs_deepset}, \dptm achieves substantially higher worst-case accuracy (minimum accuracy across attacks) than the robust aggregation baselines. Specifically, it improves over the strongest baseline by +4.7 to +22.2 percentage points depending on the dataset, demonstrating robustness even under the most challenging adversarial conditions. Beyond worst-case performance, \dptm also shows consistent gains across attacks: in 14 out of 18 dataset–attack combinations, it achieves the highest accuracy, with only small drops ($\le 2.3\%$ points) in the remaining four cases. 
The advantage of \dptm is particularly pronounced against stronger attacks (specifically \sia whitebox and \pgd-cw). On \cifar and \cifarh, baseline accuracy drops by 35-40 points, while our approach limits the drop to 20-30. On \agnews, \dptm retains 83.2\% accuracy under \pgd-cw compared to 53-55\% for the baselines. Notably, as we are working in probit space, giving more power to the adversary (\ie conducting more PGD iterations during testing than for training) does not lead to performance degradation (see \Cref{tab:randomized_ablation_vs_deepset} in appendix).

\begin{figure}[t!]
    \centering
    \includegraphics[]{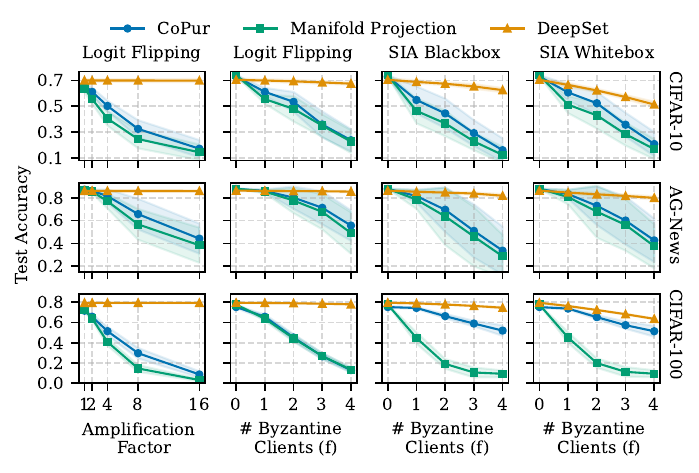}
    \caption{\dptm vs. baselines under $\alpha = 0.5$ and $n = 17$ clients. 
    In column 1, we have $f=1$ adversary; other columns use amplification factor $2$.
    See \Cref{fig:appendix_copur_vs_deepset_cifar} (App.~\ref{subsec:appendix_results_sota}) for results with $\alpha=0.3$.
    }
    \label{fig:copur_vs_deepset}
\end{figure}

\begin{table}[t!]
\centering
\scriptsize
\begin{tabular}{ccl@{\hspace{0.2cm}}c@{\hspace{0.2cm}}c@{\hspace{0.2cm}}c@{\hspace{0.2cm}}c@{\hspace{0.2cm}}c@{\hspace{0.2cm}}c@{\hspace{0.4cm}}c}
    \specialrule{0.1em}{1pt}{1pt} \addlinespace[2pt]
   $n$ & $f$ & \textbf{Aggregation} & \textbf{Logit flipping} & \textbf{\sia-bb} & \textbf{\lma} & \textbf{\ac{CPA}} & \textbf{\sia-wb} & \textbf{\pgd-cw} & \textbf{Worst case} \\

    \addlinespace[1pt]\specialrule{0.1em}{1pt}{1pt}\addlinespace[1pt]
    \multirow{5}{*}{$10$} & \multirow{5}{*}{$3$} & Mean & $\mathbf{75.1\pm1.3}$ & $63.2\pm2.5$ & $42.5\pm2.9$ & $42.2\pm2.8$ & $44.3\pm1.7$ & $19.7\pm2.2$ & $19.7\pm2.2$\\
    &  & \med &$45.6\pm2.7$ & $44.5\pm2.2$ & $52.4\pm1.6$ & $50.9\pm1.8$ & $50.5\pm2.0$ & $16.5\pm2.3$ & {\sethlcolor{red!50}\hl{$16.5\pm2.3$}}\\
    &  & \gm &$69.6\pm1.3$ & $61.3\pm1.6$ & $\mathbf{62.6\pm2.2}$ & $\mathbf{62.2\pm2.1}$ & $47.3\pm1.5$ & $18.3\pm2.6$ & $18.3\pm2.6$ \\
    &  & \tm & $69.5\pm1.3$ & $62.4\pm2.1$ & $45.2\pm2.9$ & $44.9\pm2.8$ & $46.4\pm1.6$ & $24.0\pm2.0$  &  $24.0\pm2.0$\\
    & & \dptm &$73.5\pm1.3$ & $\mathbf{66.8\pm2.0}$ & $56.3\pm2.0$ & $60.0\pm1.0$ & $\mathbf{56.5\pm1.8}$ & $\mathbf{33.2\pm2.5}$ &  {\sethlcolor{green!50}\hl{$33.2\pm2.5$}}\\
    \addlinespace[1pt]\specialrule{0.1em}{1pt}{1pt}\addlinespace[1pt]

     \multirow{5}{*}{$17$} & \multirow{5}{*}{$4$} & Mean &$75.6\pm0.7$ & $69.1\pm0.8$ & $57.0\pm2.4$ & $56.8\pm2.4$ & $44.5\pm1.4$ & $25.8\pm2.8$& $25.8\pm2.8$\\
      & & \med &$55.1\pm2.1$ & $51.8\pm2.6$ & $55.9\pm2.2$ & $55.8\pm2.2$ & $50.7\pm1.8$ & $24.5\pm1.9$ &$24.5\pm1.9$\\
      & & \gm &$71.2\pm0.8$ & $66.3\pm1.1$ & $70.2\pm0.8$ & $70.0\pm0.9$ &   $48.3\pm1.5$ & $23.8\pm2.6$ & {\sethlcolor{red!50}\hl{$23.8\pm2.6$}}\\
      & & \tm &$69.1\pm1.1$ & $65.8\pm1.2$ & $69.1\pm1.2$ & $69.1\pm1.2$ & $49.6\pm1.1$ & $32.5\pm2.4$ &$32.5\pm2.4$\\
      & & \dptm &$\mathbf{75.7\pm0.8}$ & $\mathbf{71.9\pm1.1}$ & $7\mathbf{2.2\pm1.9}$ & $   \mathbf{72.5\pm0.6}$ & $\mathbf{56.7\pm1.0}$ & $\mathbf{38.6\pm2.5}$&  {\sethlcolor{green!50}\hl{$38.6\pm2.5$}}\\
      \addlinespace[1pt]\specialrule{0.1em}{1pt}{1pt}\addlinespace[1pt]

      \multirow{5}{*}{$25$} & \multirow{5}{*}{$5$} & Mean & $\mathbf{76.0 \pm 1.1}$ & $71.1 \pm 1.3$ & $59.7 \pm 2.3$ & $59.4 \pm 2.3$ & $40.1 \pm 2.3$ & $24.1 \pm 2.0$ & $24.1 \pm 2.0$\\
      & & \med & $51.7 \pm 2.9$ & $49.2 \pm 2.0$ & $52.1 \pm 2.3$ & $52.0 \pm 2.3$ & $44.9 \pm 1.8$ & $20.6 \pm 2.9$ & {\sethlcolor{red!50}\hl{$20.6 \pm 2.9$}} \\
      & & \gm & $70.8 \pm 1.4$ & $67.3 \pm 1.1$ & $\mathbf{70.7 \pm 1.5}$ & $\mathbf{70.5 \pm 1.5}$ & $44.2 \pm 1.9$ & $21.3 \pm 2.0$ & $21.3 \pm 2.0$\\
      & & \tm & $70.1 \pm 1.7$ & $67.7 \pm 1.4$ & $70.1 \pm 1.7$ & $70.1 \pm 1.7$ & $45.8 \pm 2.0$ & $30.4 \pm 2.2$ & $30.4 \pm 2.2$\\
      & & \dptm & $72.8 \pm 1.1$ & $\mathbf{72.8 \pm 1.2}$ & $65.7 \pm 2.3$  &  $69.6 \pm 1.6$ & $\mathbf{53.8 \pm 1.5}$ & $\mathbf{50.7 \pm 1.5}$ & {\sethlcolor{green!50}\hl{$50.7 \pm 1.5$}}\\
      \addlinespace[1pt]\specialrule{0.1em}{1pt}{1pt}\addlinespace[1pt]
\end{tabular}
\caption{Scalability study on the \cifarh dataset under $\alpha=0.3$.}
\label{tab:static_aggs_vs_deepset_scalabilty_study}
\end{table}

\textbf{Performance comparison with \ac{SOTA} baselines.}
We compare \dptm with \copur and manifold projection in \Cref{fig:copur_vs_deepset}. Since \copur operates in logit rather than probit space, it is highly sensitive to input magnitudes, leading to sharp performance degradation under even mild attacks. For instance, in the logit-flipping attack, increasing amplification from $1$ to $16$ or raising the number of adversarial clients from $1$ to $4$ reduces \copur’s accuracy on \cifar nearly to random chance, while \dptm remains stable due to its probit-space design and adversarial training. A similar trend holds under the \sia attack, where \copur suffers large drops on \cifar and \agnews, though it shows relative robustness on \cifarh. 
This is likely because \copur leverages block-sparse structure of input probits where higher number of classes induce more sparsity, increasing its effectiveness.
Manifold projection performs even worse, as its autoencoder struggles with adversarial patterns spread across many classes. In contrast, \dptm consistently achieves substantially higher accuracy across all attacks, with only a minor accuracy loss in the no-adversary case ($f=0$), a known trade-off from adversarial training~\citep{madry2018towards}.

\textbf{Scalability study.}
To assess the generalizability of our approach, we conduct performance evaluations with varying the number of clients: $(n,f) \in \{(10, 3), (17, 4), (25, 5)\}$ on \cifarh under high heterogeneity ($\alpha=0.3$). 
In \Cref{tab:static_aggs_vs_deepset_scalabilty_study}, we observe that \dptm consistently provides the highest robustness as the system scales. It improves worst-case accuracy over the best baseline by 9.2\%, 6.1\% and 20.3\% points respectively when increasing $n$ and $f$. 
Beyond worst-case performance, \dptm notably achieves the highest performance in 12 out of 18 dataset–attack combinations.
While static aggregations occasionally remain competitive on milder attacks (\eg \lma), \dptm is never substantially worse and typically remains close to the highest accuracy across most scenarios.

\section{Conclusion}

We formalized the problem of robust federated inference and derived a certification for robust averaging under adversarial corruptions. For non-linear aggregation, we proposed \dptm, a permutation-invariant neural network trained adversarially and combined at test-time with robust averaging. Our experiments demonstrate that \dptm consistently improves accuracy across datasets and attack types, substantially outperforming prior methods. 
 
\subsubsection*{Acknowledgments}
Rafael is partially supported by the French National Research Agency and the French Ministry of Research and Higher Education. The joint work of Nirupam and Rafael is also supported in part by the CNRS through the International Emerging Action (IEA) program and the French National Research Agency through the ANR TuLIP.
This work has also been partly supported by the Swiss National Science Foundation, under the project ``FRIDAY: Frugal, Privacy-Aware and Practical Decentralized Learning'', SNSF proposal No. 10.001.796.

\subsection*{Reproducibility Statement}
We provide all the necessary details to reproduce our experiments in \Cref{subsec:experimental_setup} and in \Cref{sec:appendix_additional_details}.
Furthermore, we release our complete codebase at \url{https://github.com/sacs-epfl/robust-federated-inference}.

\bibliography{iclr2026_conference}
\bibliographystyle{iclr2026_conference}

\clearpage

\appendix
\crefalias{section}{appendix}
\crefalias{subsection}{appendix}
\crefalias{subsubsection}{appendix}
\section{Proof of Lemma~\ref{lem:decomp}}
\label{sec:appendix_lemma1_proof}
We restate \Cref{lem:decomp} for convenience and then prove it below.

\lemmaDecomposition*
\begin{proof}
    We can decompose the robust ensemble risk~\eqref{eq:robust-risk} for an aggregator $\psi_{\text{rob}}$ as follows:
    \begin{align}
        \mathcal{R}_{\text{adv}}(\psi_{\text{rob}})
        = \mathcal{R}(\psi_{\text{o}}) + \mathbb{E}_{(x,y) \sim \mathcal{D}} \left[ \text{Gap}(x,y\mid \psi_{\text{o}}, \psi_{\text{rob}}) \right], \label{eqn:decomp_1}
    \end{align}
    where $\text{Gap}(x,y\mid \psi_{\text{o}}, \psi_{\text{rob}}) \coloneqq \ell^{\text{adv}}_{\psi_{\text{rob}}}(x,y)  - \ell_{\psi_{\text{o}}}(x,y)$ .
    Recall that we denote $\mathbf{h}(x) = (h_1(x), \ldots , h_n(x))$. Note that for any $x$ and any $\mathbf{z} = (z_1, \dots, z_n) \in (\Delta^K)^n$, we have
    \begin{align*}
        \mathds{1} \left\{ \psi_{\text{rob}}(\mathbf{z}) \neq y \right\} - \mathds{1} \left\{ \psi_{\text{o}}(\mathbf{h}(x)) \neq y \right\} = 
        \begin{cases}
            1 & \text{if } \psi_{\text{o}}(\mathbf{h}(x)) = y \text{ and } \psi_{\text{rob}}(\mathbf{z}) \neq y, \\
            -1 & \text{if } \psi_{\text{o}}(\mathbf{h}(x)) \neq y \text{ and } \psi_{\text{rob}}(\mathbf{z}) = y, \\
            0 & \text{otherwise}.
        \end{cases}
    \end{align*}
    This implies that, for any $x$ and any $(z_1, \dots, z_n) \in (\Delta^K)^n$,
    \begin{align*}
        \mathds{1} \left\{ \psi_{\text{rob}}(\mathbf{z}) \neq y \right\} - \mathds{1} \left\{ \psi_{\text{o}}(\mathbf{h}(x)) \neq y \right\} \leq \mathds{1} \left\{ \psi_{\text{rob}}(\mathbf{z}) \neq \psi_{\text{o}}(\mathbf{h}(x)) \right\}.
    \end{align*}
    Therefore, for any $x$,
    \begin{align*}
        \max_{\mathbf{z} \in \Gamma_f(x)} \mathds{1} \left\{ \psi_{\text{rob}}(\mathbf{z}) \neq y \right\} - \mathds{1} \left\{ \psi_{\text{o}}(\mathbf{h}(x)) \neq y \right\} \leq \max_{\mathbf{z} \in \Gamma_f(x)} \mathds{1} \left\{ \psi_{\text{rob}}(\mathbf{z}) \neq \psi_{\text{o}}(\mathbf{h}(x)) \right\}.
    \end{align*}
    Substituting from above in~\eqref{eqn:decomp_1} concludes the proof.
\end{proof}

\section{Proof of Theorem~\ref{thm:main_avg}}
\label{sec:appendix_mainavg_proof}

We decompose the proof of~\Cref{thm:main_avg} into three steps for clarity purposes.

\subsection{Bounding the variance of a subset}
We first state and prove a lemma that will guide us towards the proof of the main theorem.

\begin{lemma}
\label{lemma:usefullemma}
Let $v_1, v_2, \dots, v_n \in \R^d $ be a set of vectors. We denote by $\bar{v}_n = \tfrac{1}{n} \sum_{i=1}^n v_i$ the mean of these points. Let $S \subset [n]$ such that $\vert S \vert = n-f$, we denote by 
$ \bar{v}_S = \tfrac{1}{n - f} \sum_{i \in S} v_i$ the average of the vectors in $S$. Then the following holds true:

\[
 \tfrac{1}{n - f} \sum_{i \in S} \| v_i - \bar{v}_S \|^2  \leq \tfrac{n}{n-f}\left( \tfrac{1}{n} \sum_{i=1}^n \| v_i - \bar{v}_n \|^2 \right),
\]
\end{lemma}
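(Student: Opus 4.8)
The plan is to exploit the standard bias–variance–style identity for the mean: for any finite collection of vectors, the average squared deviation from their own mean is minimized, and more importantly, shifting the center from $\bar v_S$ to $\bar v_n$ only increases the sum of squared deviations over $S$. Concretely, first I would write
\[
\sum_{i \in S} \| v_i - \bar{v}_n \|^2 = \sum_{i \in S} \| v_i - \bar{v}_S \|^2 + (n-f)\,\|\bar{v}_S - \bar{v}_n\|^2,
\]
which follows from expanding the square and using $\sum_{i\in S}(v_i - \bar v_S) = 0$ to kill the cross term. Since the second term on the right is nonnegative, this immediately gives
\[
\sum_{i \in S} \| v_i - \bar{v}_S \|^2 \;\le\; \sum_{i \in S} \| v_i - \bar{v}_n \|^2.
\]

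Next I would bound the right-hand side by the full sum over $[n]$: because every summand is nonnegative,
\[
\sum_{i \in S} \| v_i - \bar{v}_n \|^2 \;\le\; \sum_{i=1}^n \| v_i - \bar{v}_n \|^2.
\]
Chaining the two inequalities and dividing both sides by $n-f$ yields
\[
\frac{1}{n-f}\sum_{i \in S} \| v_i - \bar{v}_S \|^2 \;\le\; \frac{1}{n-f}\sum_{i=1}^n \| v_i - \bar{v}_n \|^2 \;=\; \frac{n}{n-f}\left(\frac{1}{n}\sum_{i=1}^n \| v_i - \bar{v}_n \|^2\right),
\]
which is exactly the claimed bound.

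There is essentially no hard part here — the only thing to be careful about is the orthogonal decomposition step, i.e. verifying that the cross term $2\sum_{i\in S}\langle v_i - \bar v_S,\ \bar v_S - \bar v_n\rangle$ vanishes, which holds precisely because $\bar v_S$ is the mean of $\{v_i\}_{i\in S}$ so $\sum_{i\in S}(v_i - \bar v_S) = 0$, and $\bar v_S - \bar v_n$ is a constant vector independent of the summation index. The factor $n/(n-f)$ arises purely from the mismatch between normalizing the subset sum by $n-f$ versus normalizing the full sum by $n$. I would present this in three short displayed lines with a sentence of justification between each.
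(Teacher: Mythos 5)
Your proof is correct. It hinges on the same key identity as the paper's proof, namely the bias--variance (König--Huygens) decomposition $\sum_{i \in S} \| v_i - \bar{v}_n \|^2 = \sum_{i \in S} \| v_i - \bar{v}_S \|^2 + (n-f)\|\bar{v}_S - \bar{v}_n\|^2$, justified by $\sum_{i\in S}(v_i - \bar v_S) = 0$. The difference is purely one of economy: the paper applies this decomposition to both $S$ and its complement $T = [n]\setminus S$, uses $\bar v_n = \tfrac{f\bar v_T + (n-f)\bar v_S}{n}$ to rewrite the shift terms via $\|\bar v_S - \bar v_T\|^2$, and arrives at an exact identity in which the claimed inequality follows by dropping two nonnegative remainder terms ($\tfrac{f}{n-f}\zeta_T^2$ and $\tfrac{f}{n}\|\bar v_T - \bar v_S\|^2$). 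You instead drop the nonnegative shift term immediately and then extend the sum from $S$ to $[n]$ by monotonicity, which is shorter and avoids all bookkeeping involving $T$. The paper's version buys an explicit expression for the slack in the inequality, but that extra precision is never used downstream, so your streamlined argument is entirely adequate.
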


\begin{proof}
    Let $S \subset [n]$ such that $\vert S \vert = n-f$, and $T = [n] \backslash S$. We have 
    \begin{align*} \sum_{i \in S} \Vert v_i - \bar{v}_n  \Vert^2 &= \sum_{i \in S}  \Vert v_i - \bar{v}_S  \Vert^2 +  \sum_{i \in S}  
    \Vert \bar{v}_S - \bar{v}_n  \Vert^2  + 2 \sum_{i \in S}\langle v_i - \bar{v}_S , \bar{v}_S - \bar{v}_n\rangle \\
    \intertext{As, $ \sum_{i \in S}\langle v_i - \bar{v}_S , \bar{v}_S - \bar{v}_n\rangle = 0$, we get }
     \sum_{i \in S} \Vert v_i - \bar{v}_n  \Vert^2 & = (n-f)\, \zeta_S^2 + (n-f) \Vert \bar{v}_S - \bar{v}_n  \Vert^2
    \end{align*}
    With $\zeta_S^2 = \tfrac{1}{n - f} \sum_{i \in S} \| v_i - \bar{v}_S \|^2 $. Similarly, we can show that \[ \sum_{i \in T} \Vert v_i - \bar{v}_n  \Vert^2 = f\, \zeta_T^2 + f \Vert \bar{v}_T - \bar{v}_n  \Vert^2 . \] From the above, we can simply rewrite 
    \begin{align*}
        \sum_{i \in [n]} \Vert v_i - \bar{v}_n \Vert^2 &= \sum_{i \in S} \Vert v_i - \bar{v}_n \Vert^2 + \sum_{i \in T} \Vert v_i - \bar{v}_n \Vert^2 \\
        &=  (n-f)\, \zeta_S^2 + (n-f) \Vert \bar{v}_S - \bar{v}_n  \Vert^2 + f\, \zeta_T^2 + f \Vert \bar{v}_T - \bar{v}_n  \Vert^2
    \end{align*}
    Now we note that $\bar{v}_n = \tfrac{f \bar{v}_T + (n-f) \bar{v}_S }{n}$. Hence \[ \bar{v}_S - \bar{v}_n = \tfrac{f \left( \bar{v}_S - \bar{v}_T \right)}{n} ~~~~~~~ \text{and} ~~~~~~~ \bar{v}_T - \bar{v}_n = \tfrac{(n-f) \left( \bar{v}_T - \bar{v}_S \right)}{n.}\] Substituting this in the above we get 
     \begin{align*}
        \sum_{i \in [n]} \Vert v_i - \bar{v}_n \Vert^2
        &=  (n-f)\, \zeta_S^2 + \tfrac{(n-f) f^2}{n^2} \Vert \bar{v}_S - \bar{v}_T  \Vert^2 + f\, \zeta_T^2 +  \tfrac{(n-f)^2 f}{n^2} \Vert \bar{v}_T - \bar{v}_S  \Vert^2 \\
        &=  (n-f)\, \zeta_S^2 + f\, \zeta_T^2 +  \tfrac{(n-f) f}{n} \Vert \bar{v}_T - \bar{v}_S  \Vert^2
    \end{align*}
    Simplifying the above and reorganizing the terms, we get 
    \[
       \tfrac{n}{n-f} \left( \tfrac{1}{n}\sum_{i \in [n]} \Vert v_i - \bar{v}_n \Vert^2 \right) =  \tfrac{1}{n - f} \sum_{i \in S} \| v_i - \bar{v}_S \|^2 + \tfrac{f}{n-f}\, \zeta_T^2 + \tfrac{f}{n} \Vert \bar{v}_T - \bar{v}_S  \Vert^2. 
    \]
    As all the terms in the right-hand side are non-negative, the above concludes the proof. 
\end{proof}

\subsection{Definition and properties of CWTM}
\label{sec:cwtm}

As its name indicates, coordinate-wise trimmed mean, is based on a scalar protocol called trimmed mean. Given $n$ scalar values $v_1 , \ldots, \, v_n \in \R$, trimmed mean denoted by $\mathrm{TM}$, is defined to be
\begin{align}
    \mathrm{TM}\left(v_1, \ldots, \, v_n \right) = \tfrac{1}{n-2f} \sum_{i = f + 1}^{n-f} v_{(i)}, \label{eqn:ch3_def_tm_scalar}
\end{align}
where $v_{(i)}, \cdots, v_{(n)}$ denote the order statistics of $v_{1}, \cdots, v_{n}$, i.e., the sorting of the values in non-decreasing order (with ties broken arbitrarily)\footnote{More formally, let $\tau$ be the permutation on $[n]$ such that $v_{\tau(1)} \leq  \cdots \leq v_{\tau(n)}$. The $i$-th order statistic of $v_{1},  \cdots, v_{n}$ is simply $v_{(i)} = v_{\tau(i)}$ for all $ i \in [n]$.}. Using this primitive, we can define the coordinate-wise trimmed mean aggregation as follows. Given the input vectors $v_1, \ldots, \, v_n \in \R^d$, the coordinate-wise trimmed mean of $v_1, \ldots, \, v_n$, denoted by $\text{CWTM}(v_1, \ldots, v_n)$, is a vector in $\R^d$ whose $k$-th coordinate is defined as follows,
\begin{equation*}
    \left[\text{CWTM}(v_1, \ldots, v_n)\right]_k = \text{TM}([v_1]_k, \ldots, [v_n]_k). \label{eqn:def_cwtm}
\end{equation*}

\begin{definition}[{\bf $(f, \kappa)$-robustness}]
\label{def:resaveraging}
Let $f < \tfrac{n}{2}$ and $\kappa \geq 0$. An aggregation rule $\aggregation$ is said to be {\em $(f, \kappa)$-robust} if for any vectors $v_1, \ldots, \, v_n \in \R^d$, and any set $S \subseteq [n]$ of size $n-f$, 
\begin{align*}
    \norm{\aggregation(v_1, \ldots, \, v_n) - \overline{v}_S}^2 \leq \tfrac{\kappa}{\card{S}} \sum_{i \in S} \norm{v_i - \overline{v}_S}^2
\end{align*}
where $\overline{v}_S = \tfrac{1}{\card{S}} \sum_{i \in S} v_i$. We refer to $\kappa$ as the {\em robustness coefficient}.
\end{definition}

We will use the following lemma as part of the theorem proof. We refer to  \cite{allouah2023fixing} for its proof.
\begin{lemma}[\cite{allouah2023fixing}]
\label{prop:tmean}
Let $n \in \mathbb{N}^*$ and $f < \tfrac{n}{2}$. Then TM is $(f,\kappa)$-robust with $\kappa = \tfrac{6 f}{n-2f}\, \left( 1 + \tfrac{f}{n-2f} \right)$.
\end{lemma}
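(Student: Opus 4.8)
The plan is to prove the scalar bound directly, since $\mathrm{TM}$ is a one-dimensional rule (we may take $d=1$, the coordinate-wise extension to $\textsc{CWTM}$ following by summing over coordinates). Fix an arbitrary honest set $S\subseteq[n]$ with $\card{S}=n-f$, write $T=[n]\setminus S$, and center everything at $\overline{v}_S$ by setting $u_i \coloneqq v_i-\overline{v}_S$, so that $\sum_{i\in S}u_i=0$. Let $K$ denote the set of $n-2f$ original indices surviving the trimming, split as $A=K\cap S$ (honest and surviving) and $B=K\cap T$ (corrupted and surviving), and let $D=S\setminus K$ be the honest indices that are trimmed away. I would first record the sizes: $\card{B}=c\le f$, $\card{A}=n-2f-c$, and $\card{D}=(n-f)-(n-2f-c)=f+c\le 2f$.

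The key first step is an exact identity. Since $m \coloneqq \mathrm{TM}(v_1,\dots,v_n)=\tfrac{1}{n-2f}\sum_{i\in K}v_i$, we have $m-\overline{v}_S=\tfrac{1}{n-2f}\sum_{i\in K}u_i=\tfrac{1}{n-2f}\big(\sum_{i\in A}u_i+\sum_{i\in B}u_i\big)$; using $\sum_{i\in A}u_i=-\sum_{i\in D}u_i$ (which follows from $\sum_{i\in S}u_i=0$ and $S=A\sqcup D$) this collapses to $m-\overline{v}_S=\tfrac{1}{n-2f}\big(\sum_{i\in B}u_i-\sum_{i\in D}u_i\big)$. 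This identity is the crux: it shows that only the two small sets $B$ and $D$, each of size $O(f)$, drive the deviation, which is precisely why $\kappa$ should vanish as $f\to 0$.

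It then remains to bound $\big(\sum_{i\in B}u_i-\sum_{i\in D}u_i\big)^2$ by the honest empirical variance $\sum_{i\in S}u_i^2$. For $D$ this is immediate: Cauchy--Schwarz gives $\big(\sum_{i\in D}u_i\big)^2\le\card{D}\sum_{i\in D}u_i^2\le(f+c)\sum_{i\in S}u_i^2$ since $D\subseteq S$. The corrupted sum over $B$ is the delicate part, because $u_i$ for $i\in T$ is a priori unbounded. Here I would prove an \emph{interlacing lemma} on order statistics: letting $w_1\le\cdots\le w_{n-f}$ denote the sorted honest values, a counting argument (at most $f$ of the $n$ values are corrupted, so among the $j$ values of rank $\le j$ at least $j-f$ are honest, and among the $n-j+1$ values of rank $\ge j$ at least $n-j+1-f$ are honest) yields $w_{j-f}\le v_{(j)}\le w_{j}$ for every surviving rank $j\in\{f+1,\dots,n-f\}$. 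Convexity of $t\mapsto(t-\overline{v}_S)^2$ then gives $(v_{(j)}-\overline{v}_S)^2\le(w_{j-f}-\overline{v}_S)^2+(w_j-\overline{v}_S)^2$, and summing over the distinct ranks occupied by $B$ bounds $\sum_{i\in B}u_i^2$ by a constant multiple of $\sum_{i\in S}u_i^2$.

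Combining the two estimates via Cauchy--Schwarz (using $\card{B}=c\le f$) produces a bound of the form $(m-\overline{v}_S)^2\le\tfrac{Cf(n-f)}{(n-2f)^2}\cdot\tfrac{1}{n-f}\sum_{i\in S}u_i^2$, i.e.\ $(f,\kappa)$-robustness with $\kappa=\Theta\!\big(\tfrac{f(n-f)}{(n-2f)^2}\big)=\Theta\!\big(\tfrac{f}{n-2f}(1+\tfrac{f}{n-2f})\big)$, matching the claimed form. I expect the main obstacle to be tightening the leading constant to exactly $6$: a naive application of the inequalities above yields a larger constant, so the sharp value demands a careful, non-wasteful accounting of how the $c$ corrupted survivors share their honest witnesses (and of the interplay between $\card{B}=c$ and $\card{D}=f+c$), rather than bounding each sum in isolation. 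A secondary technicality is the treatment of ties in the order statistics, which I would handle by an infinitesimal perturbation / tie-breaking argument so that the interlacing inequalities hold with distinct witness ranks.
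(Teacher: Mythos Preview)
The paper does not actually prove this lemma: immediately after stating it, the authors write ``We refer to \cite{allouah2023fixing} for its proof.'' So there is no in-paper argument to compare against; the result is imported wholesale from the cited reference.

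That said, your sketch follows exactly the standard route used in that reference. The centering identity $m-\overline{v}_S=\tfrac{1}{n-2f}\big(\sum_{i\in B}u_i-\sum_{i\in D}u_i\big)$ is correct and is indeed the heart of the argument, and the order-statistic sandwich $w_{j-f}\le v_{(j)}\le w_j$ is precisely how one controls the corrupted survivors. Your self-diagnosis is also accurate: the only place where real care is needed is in extracting the leading constant $6$ rather than a larger absolute constant, which in the original proof comes from a slightly more economical bookkeeping of the pairs $(B,D)$ than the crude ``Cauchy--Schwarz each sum separately, then add'' that you outline. Structurally there is nothing missing.
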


\subsection{Proof of the theorem}
Using the two lemmas presented above, we can now prove the main theorem, which we restate below for the reader's convenience.

\mainavg*

\begin{proof}
Let us first recall that by definition
\[
\ell^{\text{adv}}_{\psi_{\text{rob}}}(x, \hat{y}_{\text{o}}) = \left\{
    \begin{array}{ll}
        0 & \mbox{if } \psi_{\text{rob}}(\mathbf{z}) = \hat{y}_{\text{o}}, \forall \mathbf{z} \in \Gamma_f(x), \\[6pt]
        1 & \mbox{otherwise.}
    \end{array}
\right.
\]
In particular, this means that  
\begin{equation}
\label{eq:initialbound}
 \condexpect{(x, y) \sim \mathcal{D}}{\ell^{\text{adv}}_{\psi_{\text{rob}}}(x, \hat{y}_{\text{o}} )} 
 = 1 -\mathbb{P}_{(x, y) \sim \mathcal{D}}\left[ \psi_{\text{rob}}(\mathbf{z}) = \hat{y}_{\text{o}}, \forall \mathbf{z} \in \Gamma_f(x) \right]. 
\end{equation}

Now recall that, for any $x \in \mathcal{X}$, by definition  
\[
\hat{y}_{\text{o}} = \psi_{\text{o}}(\mathbf{h}(x)) := \argmax_{k \in [K]} \left[\bar{h}(x) \right]_k,
\]
where $\mathbf{h}(x) = (h_1(x), \dots, h_n(x))$. Furthermore, still by definition, we have 
\[
\psi_{\text{rob}}(\mathbf{z}) = \argmax_{k \in [K]} \left[ \text{CWTM} \left(\mathbf{z}  \right) \right]_k
\quad \forall \mathbf{z} \in \Gamma_f(x). 
\]  
Hence, whenever $\overline{h}(x)$ has a unique maximum coordinate, to satisfy $\psi_{\text{rob}}(\mathbf{z}) = \hat{y}_{\text{o}}$ for all $\mathbf{z} \in \Gamma_f(x)$, it suffices (see Appendix~\ref{app:margin} for details) to have
\begin{align}
\label{eq:margin}
    \max_{k \in [K]} \left| \left[ \text{CWTM} \left(\mathbf{z}\right) \right]_k - [\bar{h}(x)]_k  \right| < \tfrac{ \textsc{margin}  \left( \bar{h}(x) \right)}{2}, ~\forall \mathbf{z} \in \Gamma_f(x) \text{ almost surely.}
\end{align}
 In the remaining of the proof, we show that~(\ref{eq:margin}) holds as soon as 
\[ \textsc{margin}  \left( \bar{h}(x) \right) > 2\left(\sqrt{\tfrac{\kappa n}{\,n-f\,}}+\sqrt{\tfrac{f}{\,n-f\,}}\right)\sigma_x. \] To do so, we first observe that for any $x \in \mathcal{X}$, $k \in [K]$, and $\mathbf{z} \in \Gamma_f(x)$ we have 
\[
\left| \left[ \text{CWTM}(\mathbf{z}) \right]_k - [\bar{h}(x)]_k \right|
\le \left| \left[ \text{CWTM}(\mathbf{z}) \right]_k - [\bar{h}_H(x)]_k \right|
+ \left| [\bar{h}_H(x)]_k - [\bar{h}(x)]_k \right|.
\]
We then study each term of the right-hand side independently in (i) and (ii) below. \medskip 

\textbf{(i) First term.}  
By definition of $\text{CWTM}$ and Lemma~\ref{prop:tmean}, we know that for any $\mathbf{z} \in \Gamma_f(x)$ and $k \in [K]$, the following holds 
\[
 \left| \left[ \text{CWTM} \left(\mathbf{z}  \right) \right]_k - [\bar{h}_H(x)]_k  \right|^2 
 \leq \tfrac{\kappa}{n-f} \sum_{i \in H} \left( [h_i(x)]_k - [\bar{h}_H(x)]_k \right)^2,
\]
with $\kappa = \tfrac{6 f}{n-2f}\, \left( 1 + \tfrac{f}{n-2f} \right)$.  
Furthermore, using Lemma~\ref{lemma:usefullemma}, in the above we get 
\[
 \left| \left[ \text{CWTM} \left(\mathbf{z}  \right) \right]_k - [\bar{h}_H(x)]_k  \right|^2 
 \leq \left(\tfrac{\kappa n}{n-f}\right)\,  \tfrac{1}{n} \sum_{i = 1}^n \left( \left[ h_i(x) \right]_k - \left[ \bar{h}(x) \right]_k  \right)^2 .
\]
Finally, using the definition of $\sigma^2_x$, we obtain  
\[
 \left| \left[ \text{CWTM} \left(\mathbf{z}  \right) \right]_k - [\bar{h}_H(x)]_k  \right|^2 
 \leq \tfrac{\kappa n}{n-f}\, \sigma^2_x.
\]
\textbf{(ii) Second term.}  Similar to the proof of Lemma~\ref{lemma:usefullemma}, we split $[n]$ in elements of $H$ and $T= [n]\backslash H$. Then we can rewrite for any $k \in [K]$  
\[
\tfrac{1}{n}\sum_{i=1}^n ([h_i(x)]_k - [\bar{h}(x)]_k)^2 
= \tfrac{1}{n} \left( \sum_{i\in H} ([h_i(x)]_k - [\bar{h}(x)]_k)^2 
+ \sum_{i \in T} ([h_i(x)]_k - [\bar{h}(x)]_k)^2 \right).
\]
Furthermore, using Jensen’s inequality, we have 
\begin{equation}
    \label{eq:Jensen1}
\sum_{i\in H} ([h_i(x)]_k - [\bar{h}(x)]_k)^2 \ge (n-f)\left( [\bar{h}_H(x)]_k - [\bar{h}(x)]_k\right)^2\end{equation}
 \text{and } 
\[\sum_{i \in T} ([h_i(x)]_k - [\bar{h}(x)]_k)^2 \ge f\left( [\bar{h}_T(x)]_k - [\bar{h}(x)]_k\right)^2, \]

with $\bar{h}_T(x) = \tfrac{1}{f} \sum_{i \in T} h_i(x)$. As $\bar{h}_T(x) = \tfrac{ n \bar{h}(x) - (n-f)\bar{h}_H(x)}{f}$, the above also gives us
\begin{equation}
\label{eq:Jensen2}
    \sum_{i \in T} ([h_i(x)]_k - [\bar{h}(x)]_k)^2 \ge %
    \tfrac{(n-f)^2 \left( [\bar{h}_H(x)]_k - [\bar{h}(x)]_k\right)^2}{f}.
\end{equation}
Then, combining (\ref{eq:Jensen1}) and (\ref{eq:Jensen2}) and since $(n-f) + \tfrac{(n-f)^2}{f} = \tfrac{n(n-f)}{f}$ we get
\[
\sum_{i=1}^n ([h_i(x)]_k - [\bar{h}(x)]_k)^2 \;\ge\; \tfrac{n(n-f)}{f}\, \left( [\bar{h}_H(x)]_k - [\bar{h}(x)]_k\right)^2.
\]
Rearranging in the above we obtain 
\[
\left( [\bar{h}_H(x)]_k - [\bar{h}(x)]_k\right)^2 \le \tfrac{f}{n-f} \, \tfrac{1}{n}\sum_{i=1}^n ([h_i(x)]_k - [\bar{h}(x)]_k)^2 \le \tfrac{f}{n-f} \sigma_x^2.
\]

\textbf{Combining.} By substituting (i) and (ii) in the initial decomposition (applying $\sqrt{\cdot}$ on each), we get
\[
\left| \left[ \text{CWTM}(\mathbf{z}) \right]_k - [\bar{h}(x)]_k \right|
\le \left(\sqrt{\tfrac{\kappa n}{\,n-f\,}} + \sqrt{\tfrac{f}{\,n-f\,}}\right)\sigma_x.
\]

Thus, condition \eqref{eq:margin} is satisfied whenever
\[
\textsc{margin} (\bar{h}(x)) \; > \; 
2\left(\sqrt{\tfrac{\kappa n}{\,n-f\,}} + \sqrt{\tfrac{f}{\,n-f\,}}\right)\sigma_x.
\]

In particular, because we assumed $\overline{h}(x)$ has a unique maximum coordinate almost everywhere, the above implies that \[ \mathbb{P}_{(x, y) \sim \mathcal{D}}\left[ \psi_{\text{rob}}(\mathbf{z}) = \hat{y}_{\text{o}}, \forall \mathbf{z} \in \Gamma_f(x) \right] \geq \mathbb{P}_{(x, y) \sim \mathcal{D}}\!\left[ \textsc{margin}  \left( \bar{h}(x) \right) 
\ge 2\left(\sqrt{\tfrac{\kappa n}{\,n-f\,}}+\sqrt{\tfrac{f}{\,n-f\,}}\right)\sigma_x \right].\]

\textbf{Conclusion.}  
Plugging this into (\ref{eq:initialbound}) completes the proof, i.e., 
\[
\condexpect{(x, y) \sim \mathcal{D}}{\ell^{\text{adv}}_{\psi_{\text{rob}}}(x, \hat{y}_{\text{o}} )} 
\leq \mathbb{P}_{(x, y) \sim \mathcal{D}}\!\left[ \textsc{margin}  \left( \bar{h}(x) \right) 
< 2\left(\sqrt{\tfrac{\kappa n}{\,n-f\,}}+\sqrt{\tfrac{f}{\,n-f\,}}\right)\sigma_x \right].
\]
\end{proof}

\subsection{Justification for (\ref{eq:margin})}
\label{app:margin}

Here we prove that, whenever $\overline{h}(x)$ has a unique maximum coordinate, to satisfy $\psi_{\text{rob}}(\mathbf{z}) = \hat{y}_{\text{o}}$ for all $\mathbf{z} \in \Gamma_f(x)$, it suffices to have
\begin{align*}
    \max_{k \in [K]} \left| \left[ \text{CWTM} \left(\mathbf{z}  \right) \right]_k - [\bar{h}(x)]_k  \right| < \tfrac{ \textsc{margin}  \left( \bar{h}(x) \right)}{2}, ~\forall \mathbf{z} \in \Gamma_f(x). 
\end{align*}
\begin{proof}
    Let $x \in \mathcal{X}$ and $\bar{h} : \mathcal{X} \rightarrow \Delta^K$ the average function. %
    Let us assume that  (\ref{eq:margin}) holds true. Then for any $ \mathbf{z} \in \Gamma_f(x)$ and $k \in [K]$ we have 
    \begin{equation}
    \label{eq:test}
         - \tfrac{\textsc{margin}  \left( \bar{h}(x) \right)}{2} < \left[ \text{CWTM} \left(\mathbf{z}  \right) \right]_k - [\bar{h}(x)]_k < \tfrac{ \textsc{margin}  \left( \bar{h}(x) \right)}{2}. 
    \end{equation}
    Let us denote by $k^*$ the coordinate such that $[\bar{h}(x)]_{k^\ast} = \bar{h}(x)^{(1)}$, i.e., the unique maximum of $\bar{h}(x)$. 
    Let us also consider $k \in [K] \backslash k^*$, and  $\mathbf{z} \in \Gamma_f(x)$ we have
    \begin{align*}
     &\left[\text{CWTM} \left(\mathbf{z}  \right) \right]_{k^*} - \left[\text{CWTM} \left(\mathbf{z}  \right) \right]_{k} \\
     = & \left[\text{CWTM} \left(\mathbf{z}  \right) \right]_{k^*} - [\bar{h}(x)]_{k^*} - \left( \left[\text{CWTM} \left(\mathbf{z}  \right) \right]_{k} - [\bar{h}(x)]_{k} \right) +[\bar{h}(x)]_{k^*} - [\bar{h}(x)]_{k}.
    \end{align*} 
    Using~(\ref{eq:test}) on the first two terms and observing that $[\bar{h}(x)]_{k^*} - [\bar{h}(x)]_{k} \geq \textsc{margin}  \left( \bar{h}(x) \right)$ by definition, we get 
    \begin{align*}
     \left[\text{CWTM} \left(\mathbf{z}  \right) \right]_{k^*} - \left[\text{CWTM} \left(\mathbf{z}  \right) \right]_k > - \tfrac{\textsc{margin}  \left( \bar{h}(x) \right)}{2} - \tfrac{\textsc{margin}  \left( \bar{h}(x) \right)}{2} + \textsc{margin}  \left( \bar{h}(x) \right) = 0. 
    \end{align*} 
    Since the above holds for any $k \in [K] \setminus k^*$ and $\mathbf{z} \in \Gamma_f(x)$, we get that $k^*$ is the unique maximum coordinate of $\text{CWTM} \left(\mathbf{z}  \right)$ for any $\mathbf{z} \in \Gamma_f(x)$. Hence, we get \[ \psi_{\text{rob}}(\mathbf{z}) = \argmax_{k \in [K]} \left[ \text{CWTM} \left(\mathbf{z}  \right) \right]_k = \argmax_{k \in [K]} \left[\bar{h}(x) \right]_k = \hat{y}_{\text{o}} \text{ for all } \mathbf{z} \in \Gamma_f(x). \]
\end{proof}

\section{Proof of Theorem~\ref{thm:main_deepset}}
\label{app:deepset-proof}

In this appendix, we give a proof for Theorem~\ref{thm:main_deepset}, which we restate below for the reader's convenience.

\maindeepset*

\begin{proof}
Recall that 
\begin{align*}
    \phi_{\theta^*} (\mathbf{h}(x)) \coloneqq \mu_{\theta^*_2}\left( \frac{1}{n} \sum_{i = 1}^n \rho_{\theta^*_1}\left(h_i(x) \right) \right).
\end{align*}
In the following, we denote $\norm{\cdot}_2$ (i.e., the Euclidean norm) simply by $\norm{\cdot}$.
Consider an arbitrary $k \in [K]$. Due to Lipschitz continuity of $\mu_{\theta^*_2}(\cdot)$ we obtain that
\begin{align}
    & \card{ \left[ \mu_{\theta^*_2}\left( \aggregation \left(\rho_{\theta^*_1}(z_1), \ldots, \rho_{\theta^*_1}(z_n) \right) \right) \right]_k - \left[ \mu_{\theta^*_2}\left( \frac{1}{n} \sum_{i = 1}^n \rho_{\theta^*_1}\left(h_i(x) \right) \right) \right]_k } \nonumber \\ 
    & \leq L_{\mu} \norm{ \aggregation \left(\rho_{\theta^*_1}(z_1), \ldots, \rho_{\theta^*_1}(z_n) \right) - \frac{1}{n} \sum_{i = 1}^n \rho_{\theta^*_1}\left(h_i(x) \right)  }. \label{eqn:lip_mu}
\end{align}
Consider any $j \in [K]$. Since $\aggregation = \textsc{CWTM}$ is a coordinate-wise aggregator, we have 
\begin{align*}
    \left[\aggregation \left(\rho_{\theta^*_1}(z_1), \ldots, \rho_{\theta^*_1}(z_n) \right) \right]_{j} = \aggregation \left( \left[\rho_{\theta^*_1}(z_1)\right]_j, \ldots, \left[ \rho_{\theta^*_1}(z_n) \right]_j \right).
\end{align*}
Therefore,
\begin{align*}
    & \left[\aggregation \left(\rho_{\theta^*_1}(z_1), \ldots, \rho_{\theta^*_1}(z_n) \right) \right]_{j} - \left[ \frac{1}{n} \sum_{i = 1}^n \rho_{\theta^*_1}\left(h_i(x) \right) \right]_j \\
    & = \aggregation \left( \left[\rho_{\theta^*_1}(z_1)\right]_j, \ldots, \left[ \rho_{\theta^*_1}(z_n) \right]_j \right) - \frac{1}{n} \sum_{i = 1}^n \left [\rho_{\theta^*_1}\left(h_i(x) \right) \right]_j.
\end{align*}
Therefore, by the robustness property of $\textsc{CWTM}$, we obtain that for all $j \in [K]$,
\begin{align}
    & \card{\left[\aggregation \left(\rho_{\theta^*_1}(z_1), \ldots, \rho_{\theta^*_1}(z_n) \right) \right]_{j} - \left[ \frac{1}{n} \sum_{i = 1}^n \rho_{\theta^*_1}\left(h_i(x) \right) \right]_j }^2 \nonumber \\
    & \leq \left( \sqrt{\frac{\kappa n}{n - f}} + \sqrt{\frac{f}{n - f}}\right)^2 \frac{1}{n}\sum_{i = 1}^n \left( \left[ \rho_{\theta^*_1}\left(h_i(x) \right) \right]_j - \left[ \overline{\rho_{\theta^*_1}} \left( x \right) \right]_j\right)^2, \label{eqn:rho_ineq1}
\end{align}
where 
$\overline{\rho_{\theta^*_1}} \left( x \right) = \frac{1}{n} \sum_{i = 1}^n \rho_{\theta^*_1}\left(h_i(x) \right)$. Applying Jensen's inequality, we obtain that
\begin{align*}
    \sum_{i = 1}^n \left( \left[ \rho_{\theta^*_1}\left(h_i(x) \right) \right]_j - \left[ \overline{\rho_{\theta^*_1}} \left( x \right) \right]_j\right)^2 \leq \frac{1}{n} \sum_{i, l = 1}^n \left( \left[ \rho_{\theta^*_1}\left(h_i(x) \right) \right]_j - \left[ \rho_{\theta^*_1}\left(h_l(x) \right)\right]_j\right)^2.
\end{align*}
Therefore, due to Lipschitz continuity of $\rho_{\theta^*_1}(\cdot)$, we have
\begin{align*}
    \sum_{i = 1}^n \left( \left[ \rho_{\theta^*_1}\left(h_i(x) \right) \right]_j - \left[ \overline{\rho_{\theta^*_1}} \left( x \right) \right]_j\right)^2 & \leq \frac{L^2_\rho }{n} \sum_{i, l = 1}^n \left( \left[ h_i(x) \right]_j - \left[ h_l(x) \right]_j\right)^2 \\
    & = 2 L^2_\rho  \sum_{i = 1}^n \left(  \left[ h_i(x) \right]_j  - \left[ \overline{h}(x) \right]_j  \right)^2 .
\end{align*}
Therefore, for all $j \in [K]$,
\begin{align*}
     \frac{1}{n} \sum_{i = 1}^n \left( \left[ \rho_{\theta^*_1}\left(h_i(x) \right) \right]_j - \left[ \overline{\rho_{\theta^*_1}} \left( x \right) \right]_j\right)^2 \leq \frac{2 L^2_\rho}{n}  \sum_{i = 1}^n \left(  \left[ h_i(x) \right]_j  - \left[ \overline{h}(x) \right]_j  \right)^2 .
\end{align*}
Using this in~\eqref{eqn:rho_ineq1}, we obtain that
\begin{align*}
    & \norm{\aggregation \left(\rho_{\theta^*_1}(z_1), \ldots, \rho_{\theta^*_1}(z_n) \right)  - \frac{1}{n} \sum_{i = 1}^n \rho_{\theta^*_1}\left(h_i(x) \right) }^2 \\
    & \leq  2 L^2_\rho \left( \sqrt{\frac{\kappa n}{n - f}} + \sqrt{\frac{f}{n - f}}\right)^2 \, \frac{1}{n} \sum_{i = 1}^n \norm{h_i(x) - \overline{h}(x)}^2 . 
\end{align*}
Substituting from above in~\eqref{eqn:lip_mu}, we obtain that
\begin{align}
    & \card{ \left[ \mu_{\theta^*_2}\left( \aggregation \left(\rho_{\theta^*_1}(z_1), \ldots, \rho_{\theta^*_1}(z_n) \right) \right) \right]_k - \left[ \mu_{\theta^*_2}\left( \frac{1}{n} \sum_{i = 1}^n \rho_{\theta^*_1}\left(h_i(x) \right) \right) \right]_k } \nonumber \\ 
    & \leq \sqrt{2} L_\mu L_\rho \left( \sqrt{\frac{\kappa n}{n - f}} + \sqrt{\frac{f}{n - f}}\right) \sqrt{\frac{1}{n} \sum_{i = 1}^n \norm{h_i(x) - \overline{h}(x)}^2} . \label{eqn:mu_ineq_gen} 
\end{align}
Note that for any $v \in R^d$, if $\card{[v]_j} \leq 1 $ then $\norm{v}^2 \leq \norm{v}_1 \coloneqq \sum_{j = 1}^n \card{[v]_j}$. This, together with the fact that $h_i(x) \in \Delta^K$ for all $i$, implies that
\begin{align*}
    \frac{1}{n} \sum_{i = 1}^n \norm{h_i(x) - \overline{h}(x)}^2  = \frac{1}{n} \sum_{i = 1}^n \norm{h_i(x)}^2 - \norm{\overline{h}(x)}^2 \leq \frac{1}{n} \sum_{i = 1}^n \norm{h_i(x)}^2 \leq \frac{1}{n} \sum_{i = 1}^n \norm{h_i(x)}_1 = 1.
\end{align*}
Moreover, recalling that $\sigma^2_x \coloneqq \max_{k \in [K]} \frac{1}{n} \sum_{i = 1}^n \left( \left[ h_i(x) \right]_k - \left[ \overline{h}(x) \right]_k  \right)^2$, we also have
\begin{align*}
    \frac{1}{n} \sum_{i = 1}^n \norm{h_i(x) - \overline{h}(x)}^2 = \sum_{j = 1}^K \frac{1}{n} \sum_{i = 1}^n \left( [h_i(x)]_j - [\overline{h}(x)]_j \right)^2 \leq K \, \sigma^2_x.
\end{align*}
Using the above in~\eqref{eqn:mu_ineq_gen} yields,
\begin{align*}
    & \card{ \left[ \mu_{\theta^*_2}\left( \aggregation \left(\rho_{\theta^*_1}(z_1), \ldots, \rho_{\theta^*_1}(z_n) \right) \right) \right]_k - \left[ \mu_{\theta^*_2}\left( \frac{1}{n} \sum_{i = 1}^n \rho_{\theta^*_1}\left(h_i(x) \right) \right) \right]_k } \nonumber \\
    & \leq \sqrt{2} L_\mu L_\rho \left( \sqrt{\frac{\kappa n}{n - f}} + \sqrt{\frac{f}{n - f}}\right) \, \min\{1, ~ \sqrt{K} \sigma_x\} . 
\end{align*}
Note that the above holds true for any $k \in [K]$. Hence, similar to~\eqref{eq:margin}, whenever $\mu_{\theta^*_2}\left( \frac{1}{n} \sum_{i = 1}^n \rho_{\theta^*_1}\left(h_i(x) \right) \right)$ has a unique minimum coordinate almost everywhere, if 
\begin{align*}
    \textsc{margin} \left( \mu_{\theta^*_2}\left( \frac{1}{n} \sum_{i = 1}^n \rho_{\theta^*_1}\left(h_i(x) \right) \right) \right) > 2 \sqrt{2} L_\mu L_\rho \left( \sqrt{\frac{\kappa n}{n - f}} + \sqrt{\frac{f}{n - f}}\right) \, \min\{1, ~ \sqrt{K} \sigma_x\}, 
\end{align*}
then
\begin{align*}
    \argmax_{k} \left[ \mu_{\theta^*_2}\left( \aggregation \left(\rho_{\theta^*_1}(z_1), \ldots, \rho_{\theta^*_1}(z_n) \right) \right) \right]_k = \argmax_{k}  \left[ \mu_{\theta^*_2}\left( \frac{1}{n} \sum_{i = 1}^n \rho_{\theta^*_1}\left(h_i(x) \right) \right) \right]_k.
\end{align*}
The rest of the proof follows the same reasoning as in the final steps of the proof of Theorem~\ref{thm:main_avg}.
\end{proof}
\section{Sensitivity of DeepSet Aggregator Without Robust Averaging}
\label{app:benefits_robagg}

In this appendix, we show how the use of robust averaging enhances the robustness of the DeepSet aggregator. Specifically, we analyze the robustness gap of the following aggregator and compare it with that of robust aggregator defined in~\eqref{eqn:rob_deepset}. The robustness gap of the latter is characterized in Theorem~\ref{thm:main_deepset}.
\begin{align}
    \psi_{\text{rob}} (\mathbf{z}) \coloneqq \argmax_{k \in [K]} \left[ \mu_{\theta_2^*} \left( \frac{1}{n} \sum_{i = 1}^n \rho_{\theta^*_1}(z_i) \right)  \right]_k.  \label{eqn:deepset_no-robavg}
\end{align}
Throughout this appendix, we work under the same assumptions and notations as those in Appendix~\ref{app:deepset-proof}. Consider an arbitrary $k \in [K]$. Due to Lipschitz continuity of $\mu_{\theta^*_2}(\cdot)$ we obtain that
\begin{align*}
    \card{ \left[ \mu_{\theta^*_2}\left( \frac{1}{n} \sum_{i = 1}^n \rho_{\theta^*_1}(z_i) \right) \right]_k - \left[ \mu_{\theta^*_2}\left( \frac{1}{n} \sum_{i = 1}^n \rho_{\theta^*_1}\left(h_i(x) \right) \right) \right]_k } \leq L_{\mu} \norm{ \frac{1}{n}\sum_{i = 1}^n \left( \rho_{\theta^*_1}(z_i)  - \rho_{\theta^*_1}\left(h_i(x) \right) \right) }. 
\end{align*}
Due to Lipschitz continuity of $\rho_{\theta^*_1}(\cdot)$, for any $j \in [K]$, we have
\begin{align*}
    \card{\frac{1}{n}\sum_{i = 1}^n \left[\rho_{\theta^*_1}(z_i)  - \rho_{\theta^*_1}\left(h_i(x) \right) \right]_j} \leq  \frac{1}{n} \sum_{i = 1}^n \card{\left[\rho_{\theta^*_1}(z_i)  - \rho_{\theta^*_1}\left(h_i(x) \right) \right]_j} \leq \frac{L_\rho}{n} \sum_{i = 1}^n \norm{z_i - h_i(x)}. 
\end{align*}
Note that $z_i = h_i(x)$ for at least $n - f$ indices $i \in [n]$ (corresponding to honest uncorrupted probits). That is, there exists $B_x \subset [n]$ with $\card{B_x} = f$ such that $z_i = h_i(x)$ for all $i \in [n] \setminus B_x$. Therefore, 
\begin{align*}
    \card{\frac{1}{n}\sum_{i = 1}^n \left[\rho_{\theta^*_1}(z_i)  - \rho_{\theta^*_1}\left(h_i(x) \right) \right]_j} \leq \frac{L_\rho}{n} \sum_{i \in B_x} \norm{z_i - h_i(x)}. 
\end{align*}
Let $\delta \coloneqq \max_{i \in B_x}\norm{z_i - h_i(x)}$, be the degree of probits' corruption. Then, for all $j \in [K]$,
\begin{align*}
    \card{\frac{1}{n}\sum_{i = 1}^n \left[\rho_{\theta^*_1}(z_i)  - \rho_{\theta^*_1}\left(h_i(x) \right) \right]_j} \leq L_\rho \frac{f}{n} \, \delta. 
\end{align*}
This implies that 
\begin{align*}
    \norm{ \frac{1}{n}\sum_{i = 1}^n \left( \rho_{\theta^*_1}(z_i)  - \rho_{\theta^*_1}\left(h_i(x) \right) \right) } \leq \sqrt{K} L_\rho \frac{f}{n} \, \delta .
\end{align*}
Therefore, we have for all $k \in [K]$, 
\begin{align*}
    \card{ \left[ \mu_{\theta^*_2}\left( \frac{1}{n} \sum_{i = 1}^n \rho_{\theta^*_1}(z_i) \right) \right]_k - \left[ \mu_{\theta^*_2}\left( \frac{1}{n} \sum_{i = 1}^n \rho_{\theta^*_1}\left(h_i(x) \right) \right) \right]_k } \leq \sqrt{K} L_{\mu} L_\rho \frac{f}{n} \, \delta . 
\end{align*}
Hence, $\argmax_{k \in [K]} [\phi_{\theta^*}(\mathbf{z})]_k = \argmax_{k \in [K]} [\phi_{\theta^*}(\mathbf{h}(x))]_k $ whenever
\begin{align*}
    \textsc{margin} \left( \phi_{\theta^*}(\mathbf{h}(x))\right) > 2 \sqrt{K} L_{\mu} L_\rho \frac{f}{n} \, \delta .
\end{align*}
Under the assumption that $\phi_{\theta^*}\left( \mathbf{h}(x) \right)$ has a unique maximum coordinate almost everywhere, the above yields the following bound on the robustness gap for the aggregator defined in~\eqref{eqn:deepset_no-robavg}.
\begin{align}
    \condexpect{(x, y) \sim \mathcal{D}}{\ell^{\text{adv}}_{\psi_{\text{rob}}}(x, \hat{y}_{\text{o}} )} \leq \mathbb{P}_{(x, y) \sim \mathcal{D}}\left[ \textsc{margin} \left( \phi_{\theta^*}\left( \mathbf{h}(x) \right) \right) < 2 \sqrt{K} L_{\mu} L_\rho \frac{f}{n} \, \delta \right]. \label{eqn:rob-gap_deepset_no-robavg}
\end{align}

{\bf Comparison to Theorem~\ref{thm:main_deepset}.} Comparing~\eqref{eqn:rob-gap_deepset_no-robavg} to the robustness gap shown in Theorem~\ref{thm:main_deepset} for the robust aggregator defined in~\eqref{eqn:rob_deepset} we note a key difference:

\begin{quote}
    While the robustness gap without robust averaging depends upon the degree of corruption $\delta$, it is rendered independent of this degree of corruption when we use a robust averaging scheme like CWTM.
\end{quote}

In fact, when the variance between honest probits is small, the robustness gap under robust averaging is much smaller compared to the case without robust averaging. For instance, consider the case when $h_i(x) = h_j(x)$ for all $i, j \in [n]$ and all $x$. While the robustness gap when using CWTM is $0$, as per Theorem~\ref{thm:main_deepset}, that need not be the case without robust averaging as is evident by~\eqref{eqn:rob-gap_deepset_no-robavg}. %
\section{Additional related work}

In the context of robust voting within federated learning, \cite{chuFedQV} propose FedQV, a quadratic voting scheme that allocates voting budgets based on client reputation. This presumes stable client identities and therefore does not transfer to our setting where malicious participants may change per query. 
In contrast, the ensemble majority voting scheme introduced in \cite{caoensembles} does not apply because it requires central control over training to obtain several global models, each trained on a subset of clients. In our setting, however, training is fully local and independent.

Beyond federated learning, robust voting has also been studied in more general settings. \cite{allouahvoting} propose Mehestan for robust sparse voting, but it requires normalization across voters, which assumes repeated score comparisons or stable voter behavior. This is not applicable to our per-query federated inference setting. Likewise, \cite{melnykvoting} develop multi-round consensus for preference ranking and \cite{datarvoting} rely on repeated pairwise client comparisons. Both assume settings that are fundamentally different from our inference-time aggregation scenario.
\section{Additional Experimental Details}
\label{sec:appendix_additional_details}

\subsection{Hyperparameters}

In this section, we report the hyperparameters used for our experiments, and which can be used to reproduce the results.
The local training/fine-tuning is conducted using SGD optimizer and a learning rate of $0.0025$ for \cifar and $0.01$ for both \cifarh and \agnews.
The training runs for $100$ local epochs for \cifar while the fine-tuning of \cifarh and \agnews runs for $20$ epochs.
The DeepSet model is trained using the Adam optimizer for $10$ epochs for each dataset using a learning rate of $5 \times 10^{-5}$. 
Our choices of parameters are based on prior work using similar setups~\citep{allouah2024revisiting, gong2022preserving}.
We train the autoencoder in \copur using the Adam optimizer and a learning rate of $1 \times 10^{-3}$, and vary the epochs from $\{50, 100, 150\}$ depending on the dataset. 
The server's aggregator model in \copur is also trained using Adam for $40$ epochs, and a learning rate of $1 \times 10^{-3}$. 
Finally, the number of optimization iterations for \copur is varied from $10$ to $40$, while \cite{liu2022copur} use a fixed value of $10$.
During adversarial training, we sample different choices of adversaries out of $[n]$ as their identify is unknown \ie parameter $N$ in \Cref{alg:adv_training_of_agg}.  
While the number of choices total to $\sum_{i=m}^f \binom{n}{m}$, we only sample $N=120, N=300$ and $N=5000$ for each of $n = 10, 17$ and $25$ respectively.
Lastly, we set $S=50$ and $E$ such that it corresponds to $5$ epochs over the corresponding training dataset.

\subsection{Description of attacks}
\label{subsec:appendix_attack_description}

\begin{table}[h!]
    \centering
    \small
    \resizebox{\textwidth}{!}{
    \begin{tabular}{l@{\hspace{0.2cm}}c@{\hspace{0.25cm}}l}
        \toprule
        Attack & Type & Definition \\
        \midrule
        Logit Flipping~\cite{liu2022copur} & Black-box & $z_i = - \textrm{amplification} * h_i(x)$\\
        \sia-bb (ours) & Black-box & $z_i^j = \begin{cases}
                              1, & \text{if } j = \arg \max_{k \in [K] \backslash y} h_i(x) \\
                              0,  & \textrm{otherwise}
                            \end{cases}$\\
        LMA~\cite{roux2025on} & White-box & $z_i^j = \begin{cases}
                              1, & \text{if } j = \arg \min_{k \in [K]} \psi(h_1(x), \ldots, h_n(x)) \\
                              0,  & \textrm{otherwise}
                            \end{cases}$ \\
        CPA~\cite{roux2025on} & White-box & $z_i^j = \begin{cases}
                              1, & \text{if } j \text{ is least similar to } \arg \max_{k \in [K]} \frac{1}{n} \sum_{i=1}^n h_i(x) \\
                              0,  & \textrm{otherwise}
                            \end{cases}$ \\
        \sia-wb (ours) & White-box & $z_i^j = \begin{cases}
                              1, & \text{if } j = \arg \max_{k \in [K] \backslash y} \psi(h_1(x), \ldots, h_n(x)) \\
                              0,  & \textrm{otherwise}
                            \end{cases}$\\
        PGD~\cite{liu2022copur} & White-box & \makecell{$z_1^*, \ldots, z_f^* = \arg \max_{(z_1,\ldots, z_f)} \ell(\psi(\{h_\textrm{benign}, h_\textrm{adv}\}), y)$ \\ where $h_\textrm{benign} = \{h_i(x) \mid i \in \Omega_\textrm{benign}\}$ and $h_\textrm{adv} = \{z_1, \ldots, z_f\}$} \\
        \bottomrule
    \end{tabular}}
    \caption{Table listing all attacks for an input $x$ with its true label $y$. Here, $\psi$ is any aggregation in consideration which is employed at the server. We refer with $\Omega_\textrm{benign}$ the set of benign client indexes such that $\Omega_\textrm{benign} \subseteq [n]$ and $\vert\Omega_\textrm{benign}\rvert = n - f$. 
    }
    \label{tab:attacks}
\end{table}

We consider six adversarial attacks, designed to evaluate the robustness of the aggregator under varying levels of adversary knowledge and coordination. Beyond the mathematical definitions in \Cref{tab:attacks}, we provide a short intuitive description for each below:
\begin{itemize}
    \item \textbf{Logit flipping (\emph{Black-box}).}\\
    Each adversary inverts its own probits by negating and scaling them, effectively pushing its prediction away from the true class without knowledge of the server’s aggregation or other clients' predictions.
    
    \item \textbf{Strongest Inverted Attack (\sia)} [\underline{Our Proposed Attack}].
    \begin{itemize}
    \item \emph{\textbf{Black-box:}} Adversaries independently change their prediction to the second-most probable class (second-largest local probit) that is not the true class.
    \item \emph{\textbf{White-box:}} Same as black-box except that the adversaries use the global aggregation output (before perturbation) to identify the second-most probable class.
    \end{itemize}

    \item \textbf{Loss Maximization attack (LMA, \emph{White-box}).}\\
    Adversaries maximize the server's loss by targeting the least likely class among the global aggregation output.

    \item \textbf{Class Prior Attack (CPA, \emph{White-box}).}\\
    Adversaries first identify the most likely class from the aggregation output. They then select the class least similar to it according to a pre-computed similarity matrix $S \in \mathbb{R}^{K \times K}$, derived from class embeddings of a pre-trained reference model.

    \item \textbf{PGD Attack (\emph{White-box}).}\\
    Adversaries iteratively optimize their output via gradient ascent to maximize the aggregator's loss.
    This requires specifying a loss function for the aggregator. Two natural choices are the standard cross-entropy loss and the Carlini-Wagner (CW) loss \citep{CarliniWagner}. We adopt the latter (\pgd-cw) in our experiments as it produces stronger adversarial attacks: instead of penalizing all classes equally, it focuses on the two most likely classes, thereby targeting the decision boundary more effectively.
\end{itemize}

\subsection{Adapting \sia attack to the logit space}
\label{subsec:appendix_adapting_sia}

Since \copur operates directly in logit space, we adapted the \sia definitions to ensure comparability with \dptm in \Cref{subsec:results}.

\textbf{Proposed adaptation.} Each adversary identifies the largest logit $M$ either from its own output (black-box) or from the aggregation output (white-box). Given an amplification factor $\ell > 0$, the adversary then sets its own logits $(l'_i)_{i\in[K]}$ as such:
\[ 
l'_i = \begin{cases*}
\phantom{-}\ell \times M & if $i$ is the index of the second-largest logit \\
- \ell \times M & o.w.
\end{cases*}
\]
using $-m$ instead of $M$ if $M\le0$ (where $m$ is the smallest logit).
    
In our experiments, we set $\ell=2$ since we found the baseline \copur to be highly negatively affected by larger values. 

\section{Additional Experimental Results}
\label{sec:appendix_additional_results}

\subsection{Experimental validation of Theorem~\ref{thm:main_avg}}
\label{subsec:appendix_theorem_validation}
\begin{figure}[h!]
    \centering
    \includegraphics[]{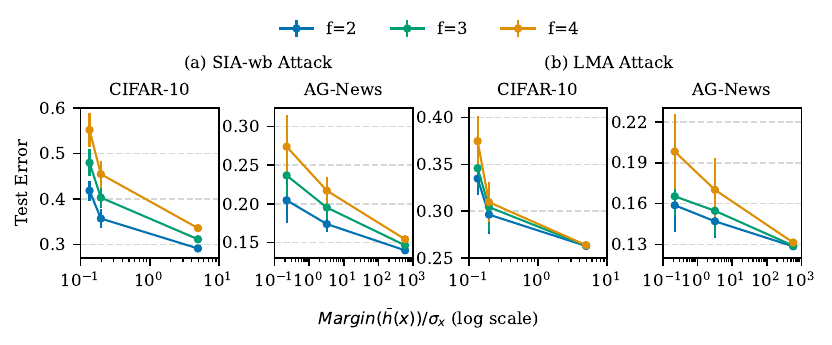}
    \caption{Validating \Cref{thm:main_avg} in practice. 
    As the ratio of $\nicefrac{\textsc{margin} ( \overline{h}(x))}{\sigma_x}$ increases, we observe a notable decrease in the test error, aligning with \Cref{thm:main_avg}.
    Adversaries use \sia white-box and \lma (see \Cref{tab:attacks}). 
    }
    \label{fig:prop1}
\end{figure}

We empirically validate the theoretical predictions of Theorem~\ref{thm:main_avg} by analyzing the evolution of test error as a function of $\nicefrac{\textsc{margin} ( \overline{h}(x))}{\sigma_x}$. According to the theorem, larger values of this quantity translate into lower test error. \Cref{fig:prop1} is generated by varying the heterogeneity level $\alpha \in \{0.5, 1, 1000\}$ for each of the two datasets and computing the average $\nicefrac{\textsc{margin} ( \overline{h}(x))}{\sigma_x}$ ratio on the testset of the respective dataset. Increasing $\alpha$ reduces the model dissimilarity $\sigma_x$, thereby increasing the ratio above. We then report test error for \tm on each configuration to obtain the figure. As expected, a higher number of adversaries $f$ consistently leads to higher test error. The lower the heterogeneity, the less significant this effect is, since lower heterogeneity allows for more redundancy within client outputs.

\subsection{Performance against Robust Aggregators}
\label{subsec:appendix_results_robust_aggs}
\subsubsection{Decreasing the number of adversaries}

\Cref{tab:static_aggs_vs_deepset_f=3} shows the performance of \dptm and static aggregations as in \Cref{tab:static_aggs_vs_deepset}, but decreasing the number of adversaries $f$ from 4 to 3.

\begin{table}[ht]
\centering
\scriptsize
\begin{tabular}{ll@{\hspace{0.2cm}}c@{\hspace{0.2cm}}c@{\hspace{0.2cm}}c@{\hspace{0.2cm}}c@{\hspace{0.2cm}}c@{\hspace{0.2cm}}c@{\hspace{0.4cm}}c}
    \specialrule{0.1em}{1pt}{1pt} \addlinespace[2pt]
    & \textbf{Aggregation} & \textbf{Logit flipping} & \textbf{\sia-bb} & \textbf{\lma} & \textbf{\ac{CPA}} & \textbf{\sia-wb} & \textbf{\pgd-cw} & \textbf{Worst case} \\

   \addlinespace[1pt]\specialrule{0.1em}{1pt}{1pt}\addlinespace[1pt]
    \multirow{5}{*}{\textbf{CF-10}} & Mean &$66.7\pm2.2$ & $62.5\pm1.8$ & $66.7\pm1.9$ & $64.7\pm2.1$ & $49.7\pm3.2$ & $35.8\pm4.4$ & {\sethlcolor{red!50}\hl{$35.8\pm4.4$}}\\
     & \med &$58.3\pm5.2$ & $55.8\pm3.0$& $57.4\pm4.0$ & $56.7\pm4.2$ & $54.2\pm3.1$ &$36.5\pm4.1$&$36.5\pm4.1$ \\
     & \gm &$65.0\pm2.4$ & $61.5\pm1.8$ & $65.6\pm1.8$ & $64.2\pm2.0$ & $51.5\pm3.1$ & $36.0\pm4.8$&$36.0\pm4.8$ \\
     & \tm & $65.0\pm2.5$ & $61.9\pm1.7$ & $65.4\pm2.0$ & $64.5\pm2.0$ & $52.0\pm2.9$ & $38.3\pm4.2$ & $38.3\pm4.2$\\
     & \dptm &$\mathbf{68.5\pm1.1}$ & $\mathbf{65.4\pm1.8}$ & $\mathbf{67.8\pm2.1}$ & $\mathbf{66.6\pm2.1}$ & $\mathbf{57.2\pm1.5}$ &  $\mathbf{53.9\pm3.2}$ & {\sethlcolor{green!50}\hl{$53.9\pm3.2$}}\\

    \addlinespace[1pt]\specialrule{0.1em}{1pt}{1pt}\addlinespace[1pt]
    \multirow{5}{*}{\textbf{CF-100}} & Mean &$\mathbf{79.3\pm0.7}$ & $75.4\pm0.8$ & $76.1\pm0.7$ & $76.0\pm0.8$ & $62.9\pm0.9$ & $52.1\pm1.1$& $52.1\pm1.1$ \\
     & \med &$68.9\pm1.6$ & $66.9\pm1.4$ &$68.9\pm1.5$ & $68.9\pm1.5$&$66.2\pm1.2$& $52.4\pm1.2$ & $52.4\pm1.2$\\
     & \gm &$75.9\pm1.0$ & $73.6\pm1.2$ & $76.1\pm1.0$ & $76.1\pm0.9$ & $64.9\pm1.1$ & $51.7\pm1.3$& {\sethlcolor{red!50}\hl{$51.7\pm1.3$}} \\
     & \tm &$75.9\pm1.0$ & $73.7\pm1.2$&$75.9\pm1.0$ & $75.9\pm1.0$&$66.5\pm0.9$ & $56.2\pm1.1$ & $56.2\pm1.1$ \\
     & \dptm &$78.6\pm0.3$ & $\mathbf{76.4\pm0.4}$ & $\mathbf{78.2\pm0.2}$ & $\mathbf{78.1\pm0.3}$ & $\mathbf{68.3\pm0.4}$ &$\mathbf{59.5\pm0.5}$& {\sethlcolor{green!50}\hl{$59.5\pm0.5$}} \\

    \addlinespace[1pt]\specialrule{0.1em}{1pt}{1pt}\addlinespace[1pt]
    \multirow{5}{*}{\textbf{\agnews}} & Mean &$84.1\pm1.1$ & $82.7\pm2.4$ & $83.8\pm1.4$ & $81.2\pm2.3$ & $76.4\pm3.7$ & $65.3\pm6.2$& $65.3\pm6.2$ \\
     & \med &$80.3\pm4.5$ & $81.3\pm3.5$&$81.3\pm2.9$ & $80.1\pm3.0$&$77.5\pm3.5$&$63.5\pm6.8$ &$63.5\pm6.8$\\
     & \gm &$83.0\pm1.5$ & $82.1\pm2.7$ & $83.3\pm1.6$ & $80.7\pm2.4$ & $77.4\pm3.1$ & $63.1\pm6.9$&{\sethlcolor{red!50}\hl{$63.1\pm6.9$}} \\
     & \tm &$83.9\pm1.1$ & $82.6\pm2.4$& $83.5\pm1.4$ & $81.2\pm2.3$&$76.4\pm3.8$& $65.4\pm6.2$&  $65.4\pm6.2$ \\
     & \dptm &$\mathbf{85.9\pm0.2}$ & $\mathbf{83.8\pm1.2}$ & $\mathbf{84.1\pm0.5}$ & $\mathbf{82.7\pm0.9}$ & $\mathbf{81.9\pm1.2}$ &$\mathbf{83.5\pm1.2}$ & {\sethlcolor{green!50}\hl{$83.5\pm1.2$}}  \\

    \addlinespace[1pt]\specialrule{0.1em}{1pt}{1pt}\addlinespace[1pt]
\end{tabular}
\caption{Accuracy (\%) of \dptm against static aggregations on the \cifar, \cifarh and \agnews datasets, with heterogeneity $\alpha=0.5$, $n=17$ clients and $f=3$ adversaries. Logit flipping uses an amplification factor of 2.}
\label{tab:static_aggs_vs_deepset_f=3}
\end{table}

\begin{table}[t!]
\centering
\scriptsize
\begin{tabular}{ll@{\hspace{0.2cm}}c@{\hspace{0.2cm}}c@{\hspace{0.2cm}}c@{\hspace{0.2cm}}c@{\hspace{0.2cm}}c@{\hspace{0.2cm}}c@{\hspace{0.2cm}}c}
    \specialrule{0.1em}{1pt}{1pt} \addlinespace[2pt]
    & \textbf{Aggregation} & \textbf{Logit flipping} & \textbf{\sia-bb} & \textbf{\lma} & \textbf{\ac{CPA}} & \textbf{\sia-wb} & \textbf{\pgd-cw} & \textbf{Worst case} \\

    \addlinespace[1pt]\specialrule{0.1em}{1pt}{1pt}\addlinespace[1pt]
    \multirow{5}{*}{\textbf{CF-10}} & Mean &$60.4\pm4.0$ & $55.8\pm4.3$ & $49.0\pm3.9$ & $43.8\pm4.6$ & $29.8\pm3.8$ & $13.6\pm2.8$ & {\sethlcolor{red!50}\hl{$13.6\pm2.8$}}\\
      & \med &$47.1\pm6.7$ & $45.8\pm3.7$ & $45.9\pm4.3$ & $42.4\pm4.2$ & $38.4\pm2.4$ & $14.1\pm3.4$&$14.1\pm3.4$\\
      & \gm &$58.2\pm4.3$ & $55.0\pm4.3$ & $\mathbf{55.6\pm4.1}$ & $\mathbf{49.7\pm4.2}$ & $34.8\pm2.8$ & $13.7\pm2.9$&$13.7\pm2.9$ \\
      & \tm &$57.6\pm4.2$ & $54.6\pm4.3$ & $54.2\pm4.2$ & $49.4\pm4.5$ & $32.1\pm3.6$ & $16.4\pm2.9$&$16.4\pm2.9$\\
      & \dptm &$\mathbf{61.9\pm3.1}$ & $\mathbf{57.9\pm3.2}$ & $46.6\pm3.6$ & $44.7\pm6.0$ & $\mathbf{43.0\pm2.2}$ & $\mathbf{44.9\pm2.7}$& {\sethlcolor{green!50}\hl{$43.0\pm2.2$}}\\

     \addlinespace[1pt]\specialrule{0.1em}{1pt}{1pt}\addlinespace[1pt]
     \multirow{5}{*}{\textbf{CF-100}} & Mean &$75.6\pm0.7$ & $69.1\pm0.8$ & $57.0\pm2.4$ & $56.8\pm2.4$ & $44.5\pm1.4$ & $25.8\pm2.8$& $25.8\pm2.8$\\
      & \med &$55.1\pm2.1$ & $51.8\pm2.6$ & $55.9\pm2.2$ & $55.8\pm2.2$ & $50.7\pm1.8$ & $24.5\pm1.9$ &$24.5\pm1.9$\\
      & \gm &$71.2\pm0.8$ & $66.3\pm1.1$ & $70.2\pm0.8$ & $70.0\pm0.9$ & $48.3\pm1.5$ & $23.8\pm2.6$ & {\sethlcolor{red!50}\hl{$23.8\pm2.6$}}\\
      & \tm &$69.1\pm1.1$ & $65.8\pm1.2$ & $69.1\pm1.2$ & $69.1\pm1.2$ & $49.6\pm1.1$ & $32.5\pm2.4$ &$32.5\pm2.4$\\
      & \dptm &$\mathbf{75.7\pm0.8}$ & $\mathbf{71.9\pm1.1}$ & $7\mathbf{2.2\pm1.9}$ & $   \mathbf{72.5\pm0.6}$ & $\mathbf{56.7\pm1.0}$ & $\mathbf{38.6\pm2.5}$&  {\sethlcolor{green!50}\hl{$38.6\pm2.5$}}\\

      \addlinespace[1pt]\specialrule{0.1em}{1pt}{1pt}\addlinespace[1pt]
     \multirow{5}{*}{\textbf{AG-News}} & Mean &$77.0\pm4.6$ & $70.7\pm6.2$ & $67.1\pm8.1$ & $61.7\pm7.7$ & $62.3\pm7.7$ & $36.8\pm9.7$ & $36.8\pm9.7$\\
      & \med &$60.9\pm9.4$ & $62.6\pm7.5$ & $54.4\pm8.7$ & $52.3\pm8.4$ & $66.0\pm7.4$ & $32.7\pm7.3$ &$32.7\pm7.3$\\
      & \gm &$74.7\pm4.9$ & $70.5\pm6.1$ & $\mathbf{67.8\pm7.8}$ & $62.6\pm7.0$ & $64.7\pm6.3$ & $33.2\pm8.7$&{\sethlcolor{red!50}\hl{$33.2\pm8.7$}} \\
      & \tm &$76.7\pm4.6$ & $70.6\pm6.2$ & $66.1\pm8.1$ & $\mathbf{61.8\pm7.7}$ & $62.6\pm7.6$ & $37.3\pm9.7$&$37.3\pm9.7$\\
      & \dptm &$\mathbf{80.4\pm3.5}$ & $\mathbf{73.6\pm4.5}$ & $61.1\pm8.9$ & $59.5\pm9.1$ & $\mathbf{73.6\pm3.8}$ & $\mathbf{72.0\pm10.3}$&{\sethlcolor{green!50}\hl{$59.5\pm9.1$}} \\

    \addlinespace[1pt]\specialrule{0.1em}{1pt}{1pt}\addlinespace[1pt]
\end{tabular}
\caption{Accuracy (\%) of \dptm against static aggregations on the \cifar, \cifarh and \agnews datasets, with heterogeneity $\alpha=0.3$, $n=17$ clients and $f=4$ adversaries. Logit flipping uses an amplification factor of 2.
}
\label{tab:static_aggs_vs_deepset_alpha0.3}
\end{table}

\begin{table}[t!]
\centering
\scriptsize
\begin{tabular}{ll@{\hspace{0.2cm}}c@{\hspace{0.2cm}}c@{\hspace{0.2cm}}c@{\hspace{0.2cm}}c@{\hspace{0.2cm}}c@{\hspace{0.2cm}}c@{\hspace{0.2cm}}c}
    \specialrule{0.1em}{1pt}{1pt} \addlinespace[2pt]
    & \textbf{Aggregation} & \textbf{Logit flipping} & \textbf{\sia} & \textbf{\lma} & \textbf{\ac{CPA}} & \textbf{\sia-wb} & \textbf{\pgd-cw} & \textbf{Worst case} \\

    \addlinespace[1pt]\specialrule{0.1em}{1pt}{1pt}\addlinespace[1pt]
    \multirow{5}{*}{\textbf{CF-10}} & Mean &$69.5\pm0.3$ & $64.6\pm1.5$ & $65.1\pm0.6$ & $63.1\pm0.9$ & $50.1\pm1.7$ & $33.5\pm2.6$ & {\sethlcolor{red!50}\hl{$33.5\pm2.6$}}\\
      & \med &$63.6\pm2.4$ & $59.4\pm2.0$ & $61.1\pm1.4$ & $60.1\pm1.4$ & $\mathbf{55.1\pm1.3}$ & $36.9\pm3.4$&$36.9\pm3.4$\\
      & \gm &$68.4\pm0.4$ & $64.1\pm1.5$ & $\mathbf{67.9\pm0.6}$ & $\mathbf{66.1\pm0.9}$ & $52.8\pm1.2$ & $35.2\pm3.0$&$35.2\pm3.0$\\
      & \tm &$68.3\pm0.3$ & $64.1\pm1.4$ & $67.8\pm0.7$ & $\mathbf{66.1\pm1.0}$ & $52.0\pm1.6$ & $36.2\pm2.6$&$36.2\pm2.6$\\
      & \dptm &$\mathbf{69.9\pm1.1}$ & $\mathbf{67.1\pm1.3}$ & $67.0\pm2.6$ & $\mathbf{66.1\pm2.1}$ & $55.0\pm2.2$ & $\mathbf{51.2\pm1.2}$& {\sethlcolor{green!50}\hl{$51.2\pm1.2$}}\\

     \addlinespace[1pt]\specialrule{0.1em}{1pt}{1pt}\addlinespace[1pt]
     \multirow{5}{*}{\textbf{CF-100}} & Mean &$\mathbf{80.5\pm0.5}$ & $76.2\pm0.7$ & $71.5\pm1.2$ & $71.5\pm1.2$ & $62.5\pm1.1$ & $48.4\pm1.5$&{\sethlcolor{red!50}\hl{$48.4\pm1.5$}} \\
      & \med &$72.9\pm1.5$ & $70.8\pm1.6$ & $73.0\pm1.4$ & $73.0\pm1.4$ & $\mathbf{69.0\pm1.3}$ & $54.8\pm1.7$&$54.8\pm1.7$\\
      & \gm &$78.3\pm0.8$ & $75.5\pm0.9$ & $\mathbf{78.5\pm0.8}$ & $78.4\pm0.8$ & $66.0\pm1.1$ & $51.2\pm1.7$& $51.2\pm1.7$\\
      & \tm &$78.1\pm1.0$ & $75.6\pm0.8$ & $78.1\pm1.0$ & $78.1\pm1.0$ & $67.1\pm1.0$ & $54.1\pm1.4$&$54.1\pm1.4$\\
      & \dptm &$79.5\pm0.4$ & $\mathbf{77.1\pm0.5}$ & $78.4\pm0.4$ & $\mathbf{78.6\pm0.6}$ & $68.0\pm0.8$ & $\mathbf{57.0\pm1.3}$& {\sethlcolor{green!50}\hl{$57.0\pm1.3$}}\\

      \addlinespace[1pt]\specialrule{0.1em}{1pt}{1pt}\addlinespace[1pt]
     \multirow{5}{*}{\textbf{AG-News}} & Mean &$85.1\pm0.9$ & $81.1\pm2.8$ & $\mathbf{83.7\pm2.4}$ & $81.0\pm3.7$ & $77.8\pm3.2$ & $65.6\pm5.8$&$65.6\pm5.8$ \\
      & \med &$80.6\pm3.2$ & $80.3\pm2.6$ & $81.8\pm2.4$ & $80.2\pm2.8$ & $78.1\pm2.6$ & $64.8\pm5.8$&$64.8\pm5.8$\\
      & \gm &$84.3\pm1.1$ & $80.8\pm2.8$ & $83.6\pm2.4$ & $81.0\pm3.7$ & $78.2\pm3.1$ & $64.5\pm5.9$&{\sethlcolor{red!50}\hl{$64.5\pm5.9$}} \\
      & \tm &$84.9\pm0.9$ & $81.1\pm2.8$ & $83.5\pm2.6$ & $81.0\pm3.7$ & $77.9\pm3.2$ & $65.8\pm5.7$ &$65.8\pm5.7$\\
      & \dptm &$\mathbf{85.9\pm0.4}$ & $\mathbf{82.9\pm1.0}$ & $83.5\pm0.9$ & $\mathbf{82.1\pm1.2}$ & $\mathbf{81.0\pm1.7}$ & $\mathbf{83.0\pm1.8}$& {\sethlcolor{green!50}\hl{$81.0\pm1.7$}} \\

    \addlinespace[1pt]\specialrule{0.1em}{1pt}{1pt}\addlinespace[1pt]
\end{tabular}
\caption{Accuracy (\%) of \dptm against static aggregations on the \cifar, \cifarh and \agnews datasets, with heterogeneity $\alpha=0.8$, $n=17$ clients and $f=4$ adversaries. Logit flipping uses an amplification factor of 2.
}
\label{tab:static_aggs_vs_deepset_alpha0.8}
\end{table}

\subsubsection{Varying the heterogeneity $\alpha$}

\Cref{tab:static_aggs_vs_deepset_alpha0.3,tab:static_aggs_vs_deepset_alpha0.8} show the accuracy of \dptm against static aggregation schemes on the three datasets of interest for $\alpha=0.3$ and $0.8$ respectively. This is similar to \Cref{tab:static_aggs_vs_deepset}, which showed the same view for $\alpha=0.5$.

\subsubsection{Letting different adversaries perform different attacks}

While the Byzantine machine learning literature generally considers adversaries that coordinate their attacks~\citep{alistarh2018byzantine,chen2017distributed,farhadkhani2022byzantine}, it remains relevant to consider settings in which attacks are not the same. 

\Cref{tab:mixed_attack} shows the accuracy of \dptm and other static aggregations on the three datasets under a ``Mixed'' attack, where each adversary performs a different attack (among Logit-flipping, \lma, \ac{CPA} and \sia-wb). Across \cifar, \cifarh and \agnews, \dptm still outperforms the baselines under this new attack (+4,1, +1.8 and +2.8 points respectively, compared to the strongest baseline). These experiments show our method can generalize to heterogeneous adversaries and is not relying on a single shared attack pattern.

\begin{table}[!h]
\centering
\scriptsize
\begin{tabular}{ll@{\hspace{0.2cm}}c@{\hspace{0.2cm}}c@{\hspace{0.2cm}}c@{\hspace{0.2cm}}}
    \specialrule{0.1em}{1pt}{1pt} \addlinespace[2pt]
    & \textbf{Aggregation} & \textbf{CF-10} & \textbf{CF-100} & \textbf{Ag-News} \\

    \addlinespace[1pt]\specialrule{0.1em}{1pt}{1pt}\addlinespace[1pt]
    \multirow{5}{*}{\textbf{Mixed attack}} & Mean & $59.4\pm3.0$ & $73.1\pm0.4$ & $81.0\pm2.6$\\
      & \med & $54.6\pm3.7$ & $65.7\pm2.0$ & $78.0\pm5.1$\\
      & \gm & $59.5\pm2.8$ & $71.7\pm0.8$ & $81.1\pm2.3$\\
      & \tm & $59.6\pm3.4$ & $72.3\pm0.8$ & $80.9\pm2.6$\\
      & \dptm &$\mathbf{63.7\pm1.5}$ & $\mathbf{74.9\pm0.3}$ & $\mathbf{83.9\pm0.9}$\\

    \addlinespace[1pt]\specialrule{0.1em}{1pt}{1pt}\addlinespace[1pt]
\end{tabular}
\caption{Accuracy (\%) of \dptm against static aggregations on the \cifar, \cifarh and \agnews datasets and under a "Mixed" attack (Logit-flipping with amplification 2, \lma, \ac{CPA} and \sia-wb), with heterogeneity $\alpha=0.5$, $n=17$ clients and $f=4$ adversaries.
}
\label{tab:mixed_attack}
\end{table}

\subsection{Giving more compute to the adversary}

\Cref{tab:pgd_more_compute} shows the performance of \dptm and static aggregation schemes on \agnews and under \pgd-cw attack, when the adversaries are given more compute than during training. Specifically, training of \dptm is conducted with 50 PGD iterations, and attackers are given up to 150 at inference time.
Since we operate in probit space, we do not see a performance drop.

\begin{table}[h]
\centering
\scriptsize
\begin{tabular}{ll@{\hspace{0.5cm}}c@{\hspace{0.2cm}}c@{\hspace{0.2cm}}c}
    \specialrule{0.1em}{1pt}{1pt} \addlinespace[2pt]
     & \textbf{Test time} $S$ & 50 & 100 & 150  \\
    \addlinespace[1pt]\specialrule{0.1em}{1pt}{1pt}\addlinespace[1pt]
    
    \multirow{5}{*}{\textbf{\agnews}} & Mean & $54.9 \pm 6.7$&$54.9 \pm 6.7$&$54.9 \pm 6.7$\\
    &\med & $53.2 \pm 7.0$& $53.1 \pm 6.4$& $53.1 \pm 6.4$\\
    &\gm & $52.6 \pm 7.3$& $52.6 \pm 7.2$ & $52.6 \pm 7.2$\\
    &\tm & $55.3 \pm 7.5$& $55.2 \pm 6.7$ & $55.2 \pm 6.7$ \\
    &\dptm & $83.2 \pm 1.4$ & $83.0 \pm 1.4$ & $83.0 \pm 1.4$ \\
    \addlinespace[1pt]\specialrule{0.1em}{1pt}{1pt}\addlinespace[1pt]
\end{tabular}
\caption{Accuracy (\%) of \dptm and static aggregations on \agnews ($\alpha=0.5$, $n=17$, $f=4$) against PGD-cw attack with various number of iterations. Training was conducted with $S = 50$ iterations.}
\label{tab:pgd_more_compute}
\end{table}

\subsection{Effectiveness of robustness elements}
\label{subsec:ablation_study}

In this section, we assess the improvement brought about by the two robustness elements -- \tm and adversarial training.
To evaluate this, we consider the worst case performance of the \dps aggregator across five attacks under different combinations of the robustness elements. 
The results are reported in \Cref{tab:ablation_intro} for $\alpha=0.5$ and \Cref{tab:ablation} for $\alpha=\{0.3,0.8\}$.
We specifically exclude \pgd attack in computing the worst case as the adversarially trained model shows elevated performance by virtue of being trained on the same attack, rendering the comparison unfair to the non-adversarially trained cases. 
We recall that the \tm operator is applied to the \dps model only at inference time and incurs no additional cost during training. 

\begin{wraptable}{r}{0.70\textwidth}
    \centering
    \scriptsize
    \begin{tabular}{c@{\hspace{0.2cm}} c@{\hspace{0.2cm}} c@{\hspace{0.2cm}} c@{\hspace{0.2cm}}  c@{\hspace{0.2cm}}  c@{\hspace{0.2cm}} c@{\hspace{0.2cm}}}
    \toprule
    \multirow{2}{*}{\textbf{DeepSet}} & \multirow{2}{*}{\textbf{\tm}} & \multirow{2}{*}{\textbf{Adv. Tr.}} & \multicolumn{2}{c}{$\alpha = 0.3$} &  \multicolumn{2}{c}{$\alpha = 0.8$}\\
    & & & \cifar & \cifarh & \cifar & \cifarh \\
    \cmidrule(lr){1-3}\cmidrule(r){4-5}\cmidrule(r){6-7}
    
    \cmark & \xmark & \xmark & $32.1 \pm 2.6$  & $30.8 \pm 4.3$  &  $51.9 \pm 1.3$  & $47.4 \pm 3.1$   \\
    
    \cmark & \cmark & \xmark &  $35.5 \pm 1.4$   & $55.0 \pm 0.8$ & $53.7 \pm 1.4$ & $67.0 \pm 0.7$  \\
    
    \cmark & \xmark & \cmark & $35.7 \pm 2.8$  & $54.1 \pm 1.0$  & $\mathbf{55.2 \pm 2.8}$ & $65.1 \pm 0.8$\\
    
    \cmark & \cmark & \cmark & $\mathbf{43.0 \pm 2.2}$  & $\mathbf{56.7 \pm 1.0}$   &  $55.0 \pm 2.2$  & $\mathbf{68.0 \pm 0.8}$ \\
    \bottomrule
    \end{tabular}
    \caption{Evaluation of robust elements in a setup with $n=17$ clients and $f=4$ adversaries. We report the worst-case test accuracy across $5$ different attacks. }
    \label{tab:ablation}
\end{wraptable}
In \Cref{tab:ablation}, the lowest performance is achieved by the \dps model without any robust element, as expected.
Interestingly, the improvements derived from enhancing \dps with either \tm or adversarial training are comparable.
For instance, on the \cifar dataset with $\alpha=0.3$, \tm results in $35.5\%$ test accuracy while adversarial training results in $35.7\%$.
Similarly, on \cifarh with $\alpha = 0.3$, they achieve $55.5\%$ and $54.1\%$ respectively, starting from $30.8\%$ when neither is applied.
However, the highest performance is achieved with both elements combined, resulting in $43.0\%$ on the \cifar and $56.7\%$ on \cifarh in the above case.
Our results for in \Cref{tab:ablation_intro} follow a similar trend.
Thus robustifying \dps with both \tm and adversarial training brings the best from both worlds -- Byzantine \ac{ML} and adversarial \ac{ML}.

\subsection{Performance of other adversarial defenses}
\label{subsec:randomized_ablation}

\begin{table}[ht]
\centering
\scriptsize
\begin{tabular}{l@{\hspace{0.2cm}}c@{\hspace{0.2cm}}c@{\hspace{0.2cm}}c@{\hspace{0.2cm}}c@{\hspace{0.2cm}}c@{\hspace{0.2cm}}c@{\hspace{0.4cm}}c}
    \specialrule{0.1em}{1pt}{1pt} \addlinespace[2pt]
    \textbf{Aggregation} & \textbf{Logit flipping} & \textbf{\sia-bb} & \textbf{\lma} & \textbf{\ac{CPA}} & \textbf{\sia-wb} & \textbf{Worst case} \\

    \addlinespace[1pt]\specialrule{0.1em}{1pt}{1pt}\addlinespace[1pt]

     Mean & $75.6\pm0.7$ & $69.1\pm0.8$ & $57.0\pm2.4$ & $56.8\pm2.4$ & $44.5\pm1.4$ & {\sethlcolor{red!50}\hl{$44.5\pm1.4$}} \\
     RA-Mean &$\mathbf{75.7\pm0.7}$ & $69.0\pm0.8$ & $55.8\pm2.4$ & $56.5\pm2.4$ & $49.1\pm1.2$ & $49.1\pm1.2$ \\

     \specialrule{0.1em}{1pt}{1pt}
     \tm & $69.1\pm1.1$ & $65.8\pm1.2$ & $69.1\pm1.2$ & $69.1\pm1.2$ & $49.6\pm1.1$ &$49.6\pm1.1$  \\
     RA-\tm &$69.8\pm1.1$ & $68.2\pm0.9$ & $71.2\pm0.7$ & $71.5\pm0.7$ & $49.9\pm1.2$ & $49.9\pm1.2$ \\

     \specialrule{0.1em}{1pt}{1pt}
     \med &$55.1\pm2.1$ & $51.8\pm2.6$ & $55.9\pm2.2$ & $55.8\pm2.2$ & $50.7\pm1.8$ & $50.7\pm1.8$\\
     RA-\med & $54.7\pm1.9$ & $49.9\pm2.0$ & $53.4\pm2.0$ & $54.8\pm1.8$ & $53.9\pm1.7$ & $49.9\pm2.0$ \\

     \specialrule{0.1em}{1pt}{1pt}
     DS-RA & $69.2\pm1.8$ & $68.4\pm1.3$ & $59.5\pm3.9$ & $59.3\pm1.5$ & $54.2\pm1.8$ & $54.2\pm1.8$ \\
     
     \specialrule{0.1em}{1pt}{1pt}
     \dptm &$\mathbf{75.7\pm0.8}$ & $\mathbf{71.9\pm1.1}$ & $\mathbf{72.2\pm1.9}$ & $\mathbf{72.5\pm0.6}$ & $\mathbf{56.7\pm1.0}$ & {\sethlcolor{green!50}\hl{$56.7\pm1.0$}}\\
     
    \addlinespace[1pt]\specialrule{0.1em}{1pt}{1pt}\addlinespace[1pt]
\end{tabular}
\caption{Accuracy (\%) of \dptm against randomized ablation on the \cifarh dataset, with heterogeneity $\alpha=0.3$, $n=17$ clients and $f=4$ adversaries. Logit flipping uses an amplification factor of 2 and RA-\tm trims 3 clients on each side. RA outputs are computed over 100 iterations.}
\label{tab:randomized_ablation_vs_deepset}
\end{table}

We also evaluate our approach against Randomized ablation (RA) \citep{levine2020randomized}, a certified adversarial defense method designed to improve robustness through repeated random subsampling of clients. Specifically, at each round, RA discards $f$ randomly chosen clients and aggregates the remaining ones (either via averaging (RA-Mean), \tm (Ra-\tm) or \med (RA-\med)) to get a candidate classification. This procedure is repeated several times with different random subsets, and the final prediction is obtained by majority voting over the outcomes. Intuitively, robustness arises from averaging across multiple independent aggregations, which reduces the influence of any single malicious client. 

\Cref{tab:randomized_ablation_vs_deepset} reports the accuracy of RA baselines and their non-RA counterparts against \dptm, when $\alpha=0.3$ and on \cifarh. We also report performance of the \dptm model without adversarial training but using RA instead under the name DS-RA.
Notably, \dptm outperforms all baselines under all  5 attack settings. In general, RA yields varying degrees of improvements and is only consistently better across all attacks in the case of \tm.
On average, it improves accuracy by 0.5 points compared to the non-RA approach. This is inferior to \dptm, which yields an average improvement of 1.5 points over the strongest baseline for each attack.
We can also note that \dptm exhibits lower variance across runs, confirming greater stability.
In general, most certified adversarial defenses are highly dependent on the input space metric (usually the $\ell_p$ norm), hence are not directly applicable to our setting.

\subsection{Performance against \ac{SOTA} baselines}
\label{subsec:appendix_results_sota}

\Cref{fig:appendix_copur_vs_deepset_cifar} shows a comparative view of \dptm, \copur and Manifold projection with $\alpha=0.3$. The same results for $\alpha=0.5$ are available in \Cref{fig:copur_vs_deepset}.

\begin{figure}[h!]
    \centering
    \includegraphics[]{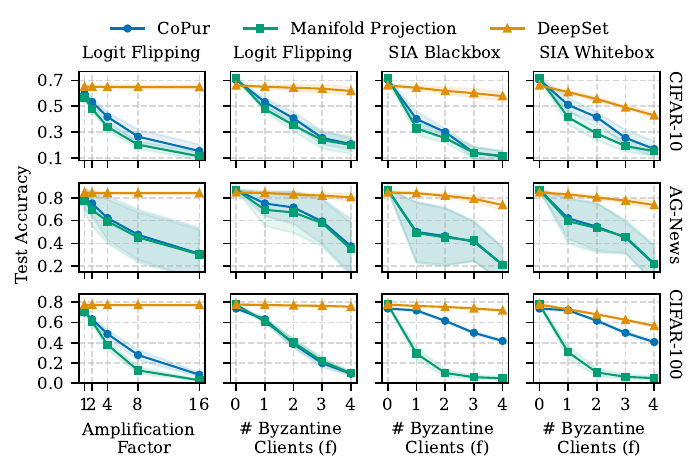}
    \caption{\dptm against baselines under heterogeneity $\alpha = 0.3$ and $n = 17$ clients. In the first column, we have $f=1$ adversary. In all remaining columns, we use an amplification factor of $2$ for the attacks.}
    \label{fig:appendix_copur_vs_deepset_cifar}
\end{figure}

\subsection{Clean accuracy of different methods}
\label{subsec:appendix_clean_accuracy}

\Cref{tab:clean_acc_static_aggs,tab:clean_acc_baselines} report the clean accuracy for all aggregations \ie accuracy in the absence of adversaries. 

\begin{table}[h]
	\centering
	\scriptsize
	\begin{tabular}{l@{\hspace{0.2cm}}l@{\hspace{0.2cm}}c@{\hspace{0.2cm}}c@{\hspace{0.2cm}}c@{\hspace{0.2cm}}c@{\hspace{0.2cm}}c@{\hspace{0.2cm}}}
		\specialrule{0.1em}{1pt}{1pt} \addlinespace[2pt]
		\textbf{Heterogeneity} & \textbf{Dataset} & \textbf{Mean} & \textbf{\ac{CWTM}} & \textbf{\ac{GM}} & \textbf{\ac{CWMed}} & \textbf{DeepSet-TM} \\
		
		\addlinespace[1pt]\specialrule{0.1em}{1pt}{1pt}\addlinespace[1pt]
		
		\multirow{3}{*}{$\alpha=0.3$}	& \cifar & $63.6 \pm 2.6$ & $61.4 \pm 3.0$ & $61.5 \pm 2.8$ & $54.9 \pm 2.6$ & $\mathbf{66.2 \pm 0.8}$ \\
		& \cifarh & $\mathbf{77.9 \pm 0.7}$ & $74.0 \pm 0.8$ & $73.7 \pm 0.4$ & $65.0 \pm 1.6$ & $77.7 \pm 0.7$\\
		& \agnews & $82.8 \pm 2.4$ & $82.7 \pm 2.4$ & $82.2 \pm 2.6$ & $82.1 \pm 2.5$ & $\mathbf{84.5 \pm 1.1}$ \\
		\specialrule{0.1em}{1pt}{1pt}
		\multirow{3}{*}{$\alpha=0.5$}	& \cifar & $69.3 \pm 1.0$ & $68.4 \pm 1.1$ & $68.2 \pm 1.2$ & $64.3 \pm 2.5$ & $\mathbf{70.5 \pm 1.1}$ \\
		& \cifarh & $\mathbf{80.0 \pm 0.4}$ & $77.8 \pm 0.7$ & $77.0 \pm 0.7$ & $73.6 \pm 1.4$ & $79.6 \pm 0.4$ \\
		& \agnews & $84.7 \pm 1.2$ & $84.7 \pm 1.2$ & $84.4 \pm 1.4$ & $84.5 \pm 1.4$ & $\mathbf{86.2 \pm 0.5}$ \\
		\specialrule{0.1em}{1pt}{1pt}
		\multirow{3}{*}{$\alpha=0.8$}	& \cifar & $70.8 \pm 0.6$ & $70.2 \pm 0.8$ & $69.9 \pm 0.7$ & $67.8 \pm 1.7$ & $\mathbf{72.0 \pm 0.7}$ \\
		& \cifarh & $\mathbf{81.4 \pm 0.4}$ & $80.1 \pm 0.6$ & $79.7 \pm 0.5$ & $77.6 \pm 0.8$ & $80.8 \pm 0.4$ \\
		& \agnews & $85.6 \pm 0.9$ & $85.6 \pm 1.0$ & $85.3 \pm 1.0$ & $85.3 \pm 0.9$ &$\mathbf{86.3 \pm 0.5}$\\
		\addlinespace[1pt]\specialrule{0.1em}{1pt}{1pt}\addlinespace[1pt]
	\end{tabular}
    \caption{Accuracy (\%) of robust aggregations with no adversaries ($f = 0$) and $n=17$. DeepSet-TM achieves the highest accuracy in most scenarios.}
    \label{tab:clean_acc_static_aggs}
\end{table}

\begin{table}[h]
	\centering
	\scriptsize
	\begin{tabular}{l@{\hspace{0.2cm}}l@{\hspace{0.2cm}}c@{\hspace{0.2cm}}c@{\hspace{0.2cm}}c@{\hspace{0.2cm}}}
		\specialrule{0.1em}{1pt}{1pt} \addlinespace[2pt]
		\textbf{Heterogeneity} & \textbf{Dataset} & \textbf{\copur} & \textbf{Manifold Projection} &  \textbf{DeepSet-TM} \\
		
		\addlinespace[1pt]\specialrule{0.1em}{1pt}{1pt}\addlinespace[1pt]
		
		\multirow{3}{*}{$\alpha=0.3$}	& \cifar &  $70.7 \pm 1.4$ & $\mathbf{71.8 \pm 0.8}$ & $66.2 \pm 0.8$ \\
		& \cifarh & $73.9 \pm 1.3$ & $\mathbf{77.8 \pm 0.4}$ & $77.7 \pm 0.7$ \\
		& \agnews & $86.9 \pm 0.5$ & $\mathbf{87.0 \pm 0.5}$ & $84.5 \pm 1.1$ \\
		\specialrule{0.1em}{1pt}{1pt}
		\multirow{3}{*}{$\alpha=0.5$}	& \cifar & $73.0 \pm 1.2$ & $\mathbf{74.1 \pm 0.8}$ & $70.5 \pm 1.1$ \\
		& \cifarh & $75.3 \pm 1.1$ & $77.8 \pm 0.3$ & $\mathbf{79.6 \pm 0.4}$ \\
		& \agnews & $87.7 \pm 0.2$ & $\mathbf{87.9 \pm 0.3}$ & $86.2 \pm 0.5$ \\
		\addlinespace[1pt]\specialrule{0.1em}{1pt}{1pt}\addlinespace[1pt]
	\end{tabular}
    \caption{Accuracy (\%) of baselines with no adversaries ($f = 0$) and $n=17$. Manifold Projection, the least robust baseline, has the highest accuracy in most scenarios. Nevertheless, the gap between the three algorithms is small in several cases.  }
    \label{tab:clean_acc_baselines}
\end{table} 

\subsection{Computational efficiency of \dptm}
\label{subsec:computational_efficiency}

\subsubsection{Inference latency}

\Cref{tab:inf_latency} shows the inference latency for each static aggregation and for the two DeepSet variants (with and without CWTM). It is reported in milliseconds on a batch of 64 samples, and averaged across the whole test set. We believe the additional cost is very much acceptable in practice.

\begin{table}[h]
	\centering
	\scriptsize
	\begin{tabular}{l@{\hspace{0.2cm}}l@{\hspace{0.2cm}}c@{\hspace{0.2cm}}c@{\hspace{0.2cm}}c@{\hspace{0.2cm}}c@{\hspace{0.2cm}}c@{\hspace{0.2cm}}}
		\specialrule{0.1em}{1pt}{1pt} \addlinespace[2pt]
		\textbf{Dataset} & \textbf{Mean} & \textbf{\med} & \textbf{\tm} &  \textbf{\gm} & \textbf{DeepSet} & \textbf{\dptm} \\
		
		\addlinespace[1pt]\specialrule{0.1em}{1pt}{1pt}\addlinespace[1pt]
		
		\textbf{\cifar} & $0.06\pm0.01$ & $0.11\pm0.04$ &$0.14\pm0.03$ & $3.0\pm0.09$ & $0.26\pm0.05$ & $0.31\pm0.03$ \\

        \textbf{\cifarh} & $0.05\pm0.01$ & $0.11\pm0.01$ &$0.13\pm0.03$&	$3.2\pm0.08$& $0.25\pm0.03$ & 	$0.32\pm0.02$ \\

        \textbf{\agnews} & $0.05\pm0.01$& $0.10\pm0.01$ & $0.13\pm0.02$ & $3.0\pm0.07$ & $0.22\pm0.01$ &$0.30\pm0.02$ \\
		\addlinespace[1pt]\specialrule{0.1em}{1pt}{1pt}\addlinespace[1pt]

	\end{tabular}
    \caption{Inference latency, in ms, for static aggregations, the standard DeepSet and \dptm on the \cifar, \cifarh and \agnews datasets. It is computed on a batch of 64 samples and averaged across the test set.}
    \label{tab:inf_latency}
\end{table} 

\subsubsection{Adversarial training cost}

We also compare the wall-clock time needed to train our DeepSet model with and without adversarial training in the configuration from \Cref{tab:static_aggs_vs_deepset}, averaged over 5 runs. Without adversarial training, DeepSet takes about 1min, 1min40 and 3min to train on \cifar, \cifarh and \agnews respectively. When adding adversarial training ($S=50$ inner steps and $N=300$ adversary sets), this becomes 1h26m, 1h29m and 1h40m. Our work is specifically intended for cross-silo settings, and we thus believe that such a one-time training cost is acceptable in real applications.

\section{LLM Usage Statement}

We acknowledge the use of LLMs in this work, limited to coding assistance, identifying potentially relevant related work, and improving the clarity and grammar of the manuscript. All LLM-generated content was reviewed and verified by the authors.

\end{document}